\newtheorem{theorem}{Theorem}
\theoremstyle{definition}
\newtheorem{remark}{{Remark}}
\tikzstyle{block} = [rectangle, rounded corners, minimum width=3cm, minimum height=1cm,text centered, draw=black, fill=red!30]
\tikzstyle{new} = [rectangle, rounded corners, minimum width=1cm, minimum
\tikzstyle{arrow} = [thick,->,>=stealth]
\useunder{\uline}{\ul}{}
\crefname{problem}{Problem}{Problems}
\crefname{example}{Example}{Examples}
\crefname{section}{Sec.}{Secs.}
\Crefname{section}{Section}{Sections}
\Crefname{table}{Table}{Tables}
\crefname{table}{Table}{Tabs.}
\crefname{figure}{Fig.}{Figs.}
\crefname{algorithm}{Algorithm}{Algorithms}
\crefname{remark}{Remark}{Remarks}
\crefname{assumption}{Assumption}{Assumption}
\crefname{theorem}{Theorem}{Theorems}
\crefname{proposition}{Proposition}{Propositions}
\crefname{lemma}{Lemma}{Lemmas}
\crefname{corollary}{Corollary}{Corollaries}
\crefname{assumption}{Assumption}{Assumptions}
\crefname{definition}{Definition}{Definitions}
\def\bfa{\mathbf{a}}
\def\bfb{\mathbf{b}}
\def\bfg{\mathbf{g}}
\def\bfn{\mathbf{n}}
\def\bfp{\mathbf{p}}
\def\bfv{\mathbf{v}}
\def\bfx{\mathbf{x}}
\def\bfy{\mathbf{y}}
\def\bfz{\mathbf{z}}
\def\bfA{\mathbf{A}}
\def\bfG{\mathbf{G}}
\def\bfH{\mathbf{H}}
\def\bfI{\mathbf{I}}
\def\bfM{\mathbf{M}}
\def\bfN{\mathbb{N}}
\def\bfP{\mathbf{P}}
\def\bfQ{\mathbf{Q}}
\def\bfR{\mathbf{R}}
\def\bfT{\mathbf{T}}
\def\bfX{\mathbf{X}}
\def\bfZo{\mathbf{0}}
\def\bfZ{\mathbf{Z}}
\def\bfLa{\mathbf{\Lambda}}
\def\bfpi{{\boldsymbol{\pi}}}
\def\bfNxyz{\mathbb{N}_{\text{p}}}
\def\bfNg{\mathbb{N}_{\theta}}
\def\bfomega{\boldsymbol{\omega}}
\def\bftheta{\boldsymbol{\theta}}
\def\bfPhi{\boldsymbol{\Phi}}
\newcommand{\smalloplus}{\mathbin{\text{\scalebox{1}{$\oplus$}}}}
\newcommand{\smallominus}{\mathbin{\text{\scalebox{1}{$\ominus$}}}}
\DeclareMathOperator*{\argmin}{arg\,min}
\newcommand{\bmat}{\begin{bmatrix}}
\newcommand{\emat}{\end{bmatrix}}
\providecommand{\optional}[1]{{}}
\providecommand{\techreport}[1]{{}}  
\begin{document}
\title{
Unobservable Subspace Evolution and Alignment for Consistent Visual–Inertial Navigation

}
\author{Chungeng Tian, Fenghua He, and Ning Hao
\thanks{Corresponding author: {Fenghua He}. The authors are with School of Astronautics, Harbin Institute of Technology, Harbin, 150000, China. {(email: tcghit@outlook.com; hefenghua@hit.edu.cn; haoning0082022@163.com)}}%
}



\maketitle
\begin{abstract}

The inconsistency issue in the Visual-Inertial Navigation System (VINS) is a long-standing and fundamental challenge. While existing studies primarily attribute the inconsistency to observability mismatch, these analyses are often based on simplified theoretical formulations that consider only prediction and SLAM correction. Such formulations fail to cover the non-standard estimation steps, such as MSCKF correction and delayed initialization, which are critical for practical VINS estimators. Furthermore, the lack of a comprehensive understanding of how inconsistency dynamically emerges across estimation steps has hindered the development of precise and efficient solutions. As a result, current approaches often face a trade-off between estimator accuracy, consistency, and implementation complexity. 
To address these limitations, this paper proposes a novel analysis framework termed Unobservable Subspace Evolution (USE), which systematically characterizes how the unobservable subspace evolves throughout the entire estimation pipeline by explicitly tracking changes in its evaluation points. This perspective sheds new light on how individual estimation steps contribute to inconsistency. Our analysis reveals that observability misalignment induced by certain steps is the antecedent of observability mismatch. Guided by this insight, we propose a simple yet effective solution paradigm, Unobservable Subspace Alignment (USA), which eliminates inconsistency by selectively intervening only in those estimation steps that induce misalignment. We design two USA methods: transformation-based and re-evaluation-based, both offering accurate and computationally lightweight solutions. Extensive simulations and real-world experiments demonstrate that the USE framework enables rigorous and intuitive consistency analysis. Furthermore, the proposed USA methods achieve superior or competitive performance in both accuracy and consistency compared to state-of-the-art approaches, while maintaining excellent computational efficiency and scalability.

\end{abstract}

\begin{IEEEkeywords}
Consistency, visual-inertial (VI) simultaneous localization and mapping (SLAM), localization.
\end{IEEEkeywords}
 
\section{Introduction}

Visual-Inertial Navigation System (VINS), also referred to as Visual-Inertial Simultaneous Localization and Mapping (VI-SLAM), has become foundational for mobile robotics by fusing cameras and inertial measurement units to deliver accurate, robust, real-time state estimation without external infrastructure. In recent years, filter-based VINS estimators have gained increasing attention due to their favorable balance between accuracy and computational efficiency\cite{Davison2007Monoslam,mourikisMultiStateConstraintKalman2007,wu2015square,sun2018msckf, genevaOpenVINSResearchPlatform2020,fan2024SchurVINSSchurComplementbased,pengSqrt2025}. However, their performance is fundamentally limited by a long-standing challenge: inconsistency\cite{julier2001CounterExampleTheory}. In practice, inconsistency manifests as estimator overconfidence, most notably in the global yaw direction, where the reported uncertainty is significantly smaller than the true estimation error, ultimately degrading overall accuracy and robustness.
Existing studies have made significant advances in understanding this issue, attributing it to \textit{observability mismatch}: the dimension of the estimator's unobservable subspace does not match that of the true physical system \cite{huangAnalysisImprovementConsistency2008}.

Despite these advances, current studies on the inconsistency issue remain constrained by two critical limitations. 
First, a significant gap persists between the simplified theoretical models employed for consistency analysis and the VINS estimators used in real-world applications. The majority of prior work typically adopts a simplified Extended Kalman Filter (EKF)-SLAM formulation, comprising only the prediction and SLAM correction steps. This simplification is analytically convenient because these two steps directly correspond to the system's motion and measurement models, 
enabling direct use of the system's observability analysis for the estimator. However, practical VINS estimators rely heavily on non-standard steps, such as the Multi-State Constraint Kalman Filter (MSCKF) correction \cite{mourikisMultiStateConstraintKalman2007} and the delayed feature initialization \cite{li2014visual}. These steps break the direct correspondence to system models and fundamentally alter the structure of the estimator's unobservable subspace. 
 As a result, existing analysis tools have limitations in accurately evaluating these steps' impact on estimator observability and consistency.
Second, although it is widely acknowledged that observability mismatch introduces spurious information along unobservable directions, the field still lacks a mechanistic, step-wise understanding of how this erroneous information is generated within the estimation process. Specifically, it remains unclear how mismatches evolve dynamically across estimation steps, which hinders the design of targeted interventions for resolving inconsistency.

To address these gaps, we introduce Unobservable Subspace Evolution (USE), a novel analysis framework that shifts the focus of inconsistency mechanisms from static, system-level characterizations of estimator observability properties to the dynamic, step-wise mechanisms by which inconsistency forms. 
Specifically, USE goes beyond simply checking whether an estimator's unobservable subspace has the correct dimension. Instead, it investigates how individual estimation steps, including MSCKF correction and delayed feature initialization, impact the subspace, and especially, how they alter the evaluation point of the subspace. 
This perspective reveals a previously unrecognized antecedent of observability mismatch: \textit{observability misalignment}, where the estimator's unobservable subspace maintains the correct dimension but is not evaluated at the current estimate (e.g., at the previous state estimate). We show that specific estimation steps induce observability misalignment, which then evolves into an observability mismatch, thereby explaining how spurious information is dynamically injected.
Crucially, while previous studies have considered the influence of the Jacobian's evaluation points on dimensionality, USE for the first time reveals that the evaluation point of the unobservable subspace itself is the decisive factor governing the emergence of inconsistency.
By explicitly tracing how estimation steps perturb the evaluation point of the subspace, USE provides a principled mechanism-driven methodology for assessing the consistency of complex VINS estimators, including MSCKF and other sliding window filters (SWF).  In summary, USE
\begin{itemize}
    \item Shifts the focus of inconsistency mechanism analysis from the observability properties of the entire estimator to the influence of individual estimation steps on consistency.
    \item Identifies observability misalignment as the antecedent of observability mismatch and
    highlights the decisive role of evaluation points of unobservable subspace on consistency.
    \item Enables accurate consistency assessment for practical VINS estimators with non-standard steps,  thereby overcoming the limitations of existing analysis tools.
\end{itemize}

Building upon insights from USE, we further propose Unobservable Subspace Alignment (USA), a simple yet effective paradigm for resolving inconsistency by eliminating the observability misalignment at its source. 
Unlike existing solutions that attempt to constrain or redesign the entire estimator, USA precisely targets and corrects only the specific steps that induce misalignment. 
The main contributions of USA include:
 \begin{itemize}
    \item A targeted solution paradigm is proposed, which intervenes only in the steps inducing misalignment, representing the first consistency-improvement work that preserves Jacobian optimality without requiring state-independent unobservable subspaces.
    \item Two USA methods are presented: a transformation-based method that offers a straightforward closed-form solution, and an evaluation-based method that can be combined with transformation to further enhance accuracy without additional computational cost.
    \item For scenarios where multiple estimation substeps share a common linearization point, we develop two tailored strategies: one applicable to all cases, and the other specifically optimized for delayed feature initialization.
 \end{itemize}
 
 Finally, extensive simulations and real-world experiments are conducted to demonstrate the effectiveness of the proposed USE and USA. The results show that the USE framework enables rigorous and intuitive consistency analysis of practical VINS estimators. Moreover, USA achieves competitive performance (via transformation) or superior performance (via re-evaluation) in both consistency and accuracy compared to existing state-of-the-art methods, while maintaining computational efficiency and scalability.

\section{Related Work}

This section reviews research related to the inconsistency issue in VINS. For a broader overview of VINS, see \cite{durrant-whyte2006simultaneous,bailey2006simultaneous, cadena2016present,huangVisualInertialNavigationConcise2019,ebadi2024PresentFutureSLAM}.
\subsection{Mechanism of Inconsistency}

The inconsistency problem in VINS was first observed experimentally in \cite{julier2001CounterExampleTheory}, which showed that when a static robot repeatedly fuses relative position measurements from a landmark that has never been observed, the reported pose covariance, particularly the yaw angle, decreases in an anomalous and unreasonable manner. Follow-up studies further demonstrated experimentally that such inconsistency disappears when the estimator is linearized at the true state \cite{bailey2006ConsistencyEKFSLAMAlgorithm}, suggesting a connection between inconsistency and the choice of linearization points. Building on this insight, subsequent studies \cite{castellanos2004LimitsConsistencyEKFbased,shoudonghuangConvergenceConsistencyAnalysis2007,rodriguez-losada2006ConsistencyImprovementSLAM} attributed inconsistency to discrepancies between the linearization point and the true state.
While these early works provided valuable observations and the essential foundation for analyzing VINS inconsistency, their primary reliance on empirical observation means that a formal mathematical link between inconsistency and the linearization point has yet to be established. Formalizing this relationship is critical for improving the practical applicability of inconsistency solutions.

A key breakthrough came from \cite{huangAnalysisImprovementConsistency2008}, which examined the observability matrix evaluated at state estimates and identified that inconsistency arises from the observability mismatch. Specifically, as state estimates are updated, the linearization point of the estimator changes. This causes the inherently unobservable yaw angle to be erroneously treated as observable, which leads to inconsistency. 
The observability matrix-based analysis framework has been widely adopted in subsequent research, not only in VINS\cite{huangObservabilitybasedRulesDesigning2010,li2013HighprecisionConsistentEKFbased,heschConsistencyAnalysisImprovement2014} but also in cooperative localization \cite{huang2011ObservabilitybasedConsistentEKF,hao2022KDEKFKalmanDecomposition,hao2025consistency}, and map-based SLAM\cite{zhang2023ConsistentEfficientMapBased,gao2025NightvoyagerConsistentEfficient}. 
More recent studies  \cite{hao2025transformationbased,song2024affine} further proved that if the system's unobservable subspace remains independent of the state, changes in the linearization points do not alter the observability properties. As a result, inconsistency can be avoided by making the unobservable subspace constant.

In addition to the observability matrix-based analysis, the information matrix (or covariance matrix) has also been employed to study the inconsistency issue \cite{huang2011ObservabilityconstrainedSlidingWindow,chen2023OptimizationbasedVINSConsistency,barrauEKFSLAMAlgorithmConsistency2016,zhangConvergenceConsistencyAnalysis2017}. Huang et al. \cite{huang2011ObservabilityconstrainedSlidingWindow} computed the information matrix by substituting the Jacobian evaluated at the estimates and found that the dimension of the unobservable subspace decreased, which corroborated the results based on the observability matrix.

While prior research has firmly established the link between observability mismatch and inconsistency, it has overlooked a critical factor: the evaluation point of the unobservable subspace. This gap has left the dynamic mechanism, the precise step-by-step process by which inconsistency emerges, largely unexplored. To address this, our work introduces the USE, a novel approach for consistency analysis. USE provides a crucial refinement by considering not only the dimensionality but also the evaluation point of the unobservable subspace. This enhanced perspective allows us to trace how individual estimation steps contribute to the problem, ultimately revealing that observability misalignment is the antecedent of mismatch. Furthermore, we apply this analysis to the MSCKF estimator and demonstrate that its consistency property is not equivalent to that of the EKF-SLAM estimator, thereby correcting a prevalent, long-standing misconception in the literature \cite{li2013HighprecisionConsistentEKFbased,heschConsistencyAnalysisImprovement2014,hesch2014cameraimubased,huai2022robocentric}.

\subsection{Methods for Addressing Inconsistency}

Two main categories of solutions have been developed for the inconsistency problem:

\subsubsection{Observability constraints} These methods directly modify Jacobian matrices to ensure that the estimator has the correct dimension of the unobservable subspace.
After \cite{huangAnalysisImprovementConsistency2008} identified observability mismatch as the direct cause of inconsistency, the First-Estimates Jacobian (FEJ) method was promptly introduced. FEJ requires all Jacobian matrices to be linearized at the first estimated state, which helps prevent changes in the dimension of the unobservable subspace caused by variations in the linearization points. The FEJ method is conceptually simple and computationally efficient, but its main drawback is the lack of Jacobian optimality: when the initial estimates deviate significantly from the true values, large linearization errors can occur, leading to degraded performance or even divergence. Subsequent methods, such as Observability Constrained EKF (OC-EKF)\cite{heschConsistencyAnalysisImprovement2014,hesch2014cameraimubased} and FEJ2 \cite{chenFEJ2ConsistentVisualInertial2022} have been proposed to address this issue. 
OC-EKF ensures correct observability by optimizing the Jacobians rather than directly modifying the linearization point, while FEJ2 mitigates the linearization error introduced by FEJ. These methods achieve significant performance gains compared to FEJ, though the Jacobian optimality is not yet guaranteed.

\subsubsection{State-independent unobservable subspace}
\label{sec:method2}
This class of methods achieves state-independence of the unobservable subspace by reformulating the system state or transforming the error-state, thereby ensuring invariance of the unobservable subspace to changes in the linearization points \cite{hao2025transformationbased,song2024affine}. Representative approaches include Robocentric formulation, Right Invariant Extended Kalman Filter (RI-EKF), Equivariant Filters (EqF), and Transformed Extended Kalman Filter (T-EKF).

\textbf{Robocentric} formulation was first presented in \cite{castellanos2004LimitsConsistencyEKFbased}. It defines both the robot pose and the landmark states in the local coordinate frame of the robot body.
Later works generalized this concept by proposing camera-centric modeling \cite{civera20091point} and by integrating velocity along with spatiotemporal parameters \cite{bloesch2017IteratedExtendedKalman}. The Robocentric formulation has also been implemented in SWF \cite{huai2022robocentric} and square-root estimators \cite{huai2022SquarerootRobocentricVisualinertial,huai2024ConsistentParallelEstimation}.  Although the Robocentric formulation initially demonstrated better consistency, its underlying mechanism was not well understood. \cite{huai2022robocentric} pointed out that, under the Robocentric formulation, the system's unobservable subspace is independent of states, thus preventing inconsistency issues.
However, the design of the Robocentric formulation often relies on physical intuition. At present, there are no general guidelines for selecting Robocentric variables to ensure a state-independent unobservable subspace, especially for complex systems.

\textbf{RI-EKF}, as well as the more general I-EKF\cite{barrauInvariantExtendedKalman2017,barrau2018invariant,barrauGeometryNavigationProblems2023}, has attracted significant attention and application in robotic state estimation in recent years. For systems that possess both group-affine dynamics and output invariance\cite{hartley2020contactaided, yoon2024InvariantSmootherLegged}, the I-EKF can effectively reduce linearization errors and enhance convergence, making it the preferred choice. Due to the group-affine characteristic of IMU dynamics, modeling the IMU state using an invariant representation can yield a “banana-shaped” uncertainty distribution, which accurately captures the actual error distribution\cite{long2013banana,barfoot2014associating,brossard2022AssociatingUncertaintyExtended}. Although visual measurements do not exhibit output invariance, right-invariant parameterization ensures that the unobservable subspace remains independent of states, a property that has facilitated the adoption of RI-EKF in VINS. For instance, \cite{barrauEKFSLAMAlgorithmConsistency2016,zhangConvergenceConsistencyAnalysis2017} jointly modeled IMU states and feature points on the $\mathbf{SE}_{2+m}(3)$ group, while \cite{wuInvariantEKFVINSAlgorithm2017} subsequently integrated right-invariant state representation into the MSCKF framework. Nevertheless, identifying an appropriate invariance group for arbitrary systems can be nontrivial, and RI-EKF may involve additional computational overhead \cite{yangDecoupledRightInvariant2022,chen2024visualinertial}.

\textbf{EqF} offers a broader perspective for addressing system symmetries  \cite{vangoorEquivariantFilterEqF2023}. Unlike I-EKF, which focuses on directly modeling systems on matrix Lie groups, the core of EqF lies in identifying the equivariant symmetries inherent in the system dynamics and measurement equations \cite{fornasier2024EquivariantSymmetriesAided}. This framework is more flexible and can handle cases where the system dynamics do not strictly satisfy the group-affine property. When multiple symmetries are present (e.g., dealing with IMU bias \cite{fornasierEquivariantSymmetriesInertial2023,fornasier2022EquivariantFilterDesign,fornasier2022OvercomingBiasEquivariant,delama2024EquivariantIMUPreintegration}), EqF allows for selecting the most appropriate symmetry group based on specific task requirements. Additionally, when the system exhibits output equivariance, EqF can effectively eliminate second-order linearization errors in the measurement equations. Fortunately, visual measurements possess this equivariance \cite{vangoor2023eqvio}. Notably, EqF reduces to I-EKF when the system kinematics are group-affine \cite{vangoorEquivariantFilterEqF2023}.

\textbf{T-EKF} is a consistency-improved method proposed very recently. Its core idea is to apply a well-designed transformation to the error-state and perform state estimation based on the transformed error-state. T-EKF was first introduced in \cite{hao2022KDEKFKalmanDecomposition}, where the linearized error-state system is transformed into observable and unobservable components. Subsequent studies \cite{hao2025transformationbased} have further clarified the generality of this method: as long as the designed transformation ensures that the unobservable subspace of the transformed system is independent of the state, the consistency of the estimator can be guaranteed. Compared to methods such as Robocentric, RI-EKF, and EqF, a key advantage of T-EKF is that it provides a more direct and mathematically guided framework for ensuring consistency\cite{hao2025consistency}. However, T-EKF may sometimes encounter computational bottlenecks in covariance propagation similar to those of RI-EKF \cite{tian2025teskf}.

 Unlike existing methods, the proposed USA addresses the inconsistency issue at the level of individual steps rather than the entire estimator. Moreover, compared to the first category of methods, USA does not sacrifice the Jacobian optimality to enforce observability constraints, thereby achieving higher accuracy. In contrast to the second category of methods, USA does not need to reformulate the state or transform the error-state, nor does it require the unobservable subspace to be state-independent, while still offering superior or at least comparable performance.

\section{Problem Formulation}
\label{sec:system_model}
This section provides a brief description of the visual-inertial navigation system, its observability properties, and the issue of
 inconsistency.

\subsection{Visual-Inertial Navigation System} 
The system model employed here is equivalent to that used in OpenVINS \cite{genevaOpenVINSResearchPlatform2020}, with orientations represented by rotation matrices instead of JPL quaternions \cite{trawny2005indirect} to enhance readability. It is noteworthy that the proposed approaches are not limited to the specific model discussed here but are applicable to a wide range of VINS formulations.

\label{sec:system_state}

\textbf{System State:}
The system state in VINS includes the following variables: 
\begin{subequations}
    \begin{align}
    \bfx &= \begin{pmatrix}
    \bfx_{\mathcal{I}}, & \bfx_{\mathcal{W}}, & \bfx_{\mathcal{F}}
    \end{pmatrix}, \\
    \bfx_{\mathcal{I}} &=
    \setlength{\arraycolsep}{2.5pt}
    \begin{pmatrix}
    {^G}\bfR_{I_\tau},& {^G}\bfp_{I_\tau},& {^G}\bfv_{I_\tau},&\bfb_{g},& \bfb_{a}
    \end{pmatrix}, \\
    \bfx_{\mathcal{W}} & = \begin{pmatrix}
    \bfpi_{1},&\cdots ,& \bfpi_{n}\end{pmatrix}, \\
    \bfx_{\mathcal{F}} & = \begin{pmatrix}
    {^G} \bfp_{f_1},&\cdots, &{^G} \bfp_{f_m}
    \end{pmatrix},
    \end{align}
    \label{equ:system_state}%
    \end{subequations}
    where ${^G}\bfR_{I_\tau}\in \mathbf{SO}(3)$, ${^G}\bfp_{I_\tau} \in \mathbb{R}^{3}$, and ${^G}\bfv_{I_\tau} \in \mathbb{R}^{3}$ represent the current orientation, position, and velocity of the IMU in the global frame, respectively. 
    The terms $\bfb_{g} \in \mathbb{R}^{3} $ and $\bfb_{a} \in \mathbb{R}^{3} $ denote the biases of the gyroscope and accelerometer, respectively. $\bfpi_i$ is defined as \begin{equation}
    \bfpi_i = \begin{pmatrix}
    {^G}\bfR_{I_i}, {^G}\bfp_{I_i}
    \end{pmatrix}, \quad i \in \{1,\ldots,n\},
    \end{equation} indicating the pose of a cloned IMU at a previous time $t_i$.
    Meanwhile, the global position of the $j$-th feature is denoted by ${^G}\bfp_{f_j} \in \mathbb{R}^3$, for $j=1,\ldots,m$.
    Let $\hat \bfx$ denote the estimated system state. The error-state $\tilde \bfx \in \mathbb{R}^{N} \,(N=15+6n+3m)$ describes the difference between the true and estimated states as follows: \begin{equation}
    \bfx = \hat \bfx \boxplus \tilde \bfx.
    \label{eq:err_state}
    \end{equation} For IMU positions, velocities, biases, and feature positions, the operator $\boxplus$ is defined within the vector space, whereas for IMU orientations,  the operation is defined as \begin{equation}
    \bfR = \hat \bfR \, \text{Exp}(\tilde{\bftheta}),
    \end{equation} where $\tilde{\bftheta} \in \mathbb{R}^3$ is the error-state for orientations and $\text{Exp}(\cdot)$ represents the exponential map defined on $\mathbf{SO}(3)$.

    \textbf{IMU Dynamics:} The dynamic model for the current IMU state $\bfx_{\mathcal{I}}$ is expressed as: \begin{equation}
        \left\{
        \begin{array}{l}
        {^G}\dot\bfR_{I_\tau} = {^G}\bfR_{I_\tau} [\bfomega]_\times, \\
        {^G}\dot\bfp_{I_\tau} = {^G}\bfv_{I_\tau}, \\
        {^G}\dot\bfv_{I_\tau} = {^G}\bfR_{I_\tau} \bfa + \bfg + \bfn_a,\\
        \dot{\bfb}_g = \bfn_{wg}, \\
        \dot\bfb_a = \bfn_{wa},
        \end{array}
        \right.
        \label{equ:IMU_dynamics}
        \end{equation} where \begin{subequations}
        \begin{align}
        \bfomega & = \bfomega_m - \bfb_{g} - \bfn_{g}, \\
        \bfa & = {^G}\bfR_{I_\tau}(\bfa_m - \bfb_{a} - \bfn_{a}) + \bfg,
        \end{align}
        \end{subequations} Here, $\bfomega_m$ and $\bfa_m$ denote the raw measurements from the gyroscope and accelerometer, respectively. The terms $\bfn_{wg}$, $\bfn_{wa}$, $\bfn_{g}$, and $\bfn_{a}$ represent Gaussian white noise, while $\bfg$ denotes the gravity vector in the global frame.
        
        Within the dynamic equations, the state estimate can be integrated from $\hat{\bfx}_{k-1}$ to $\hat{\bfx}_{k}$ by setting the noise terms to zero. The corresponding error-state is propagated as follows: \begin{equation}
        \tilde{\bfx}_{k} = \bfPhi_{k-1} \tilde{\bfx}_{k-1} + \bfG_{k-1} \bfn_{k-1},\label{equ:less-propagation}
        \end{equation}
         where $\bfn = \begin{bmatrix} \bfn_{wg}^\top& \bfn_{wa}^\top& \bfn_{g}^\top& \bfn_{a}^\top \end{bmatrix}^\top$, \begin{align}
        \setlength{\arraycolsep}{0.1cm}
        \bfPhi_{k-1} &= \begin{bmatrix}
        \bfPhi_{\mathcal{I}} &\bfZo \\
        \bfZo & \bfI_{3m+6n}
        \end{bmatrix},
        \quad \bfG_{k-1} = \begin{bmatrix}
        \bfG_{\mathcal{I}} \\
        \bfZo_{(3m+6n)\times 12} \\
        \end{bmatrix},
        \end{align} $\bfPhi_\mathcal{I} \in \mathbb{R}^{15\times 15}$ and $\bfG_\mathcal{I} \in \mathbb{R}^{15\times 12}$ represent the error-state transition matrix and the noise propagation matrix for the IMU state, respectively. Given that the top-left $9 \times 9$ submatrix of $\bfPhi_\mathcal{I}$ is crucial for subsequent derivations, we explicitly present it as follows: \begin{equation}
        \bfPhi_{\mathcal{I}_{[1:9,1:9]}} = \begin{bmatrix}
        {^G}\hat{\bfR}_{I_{k}}^\top {^G}\hat\bfR_{I_{k-1}} & \bfZo & \bfZo_{3\times 3}\\
        -{^G}\hat{\bfR}_{I_{k-1}}[\Delta \hat\bfp_{k-1}]_\times & \bfI_3 & \bfI_3\Delta t \\
        -{^G}\hat{\bfR}_{I_{k-1}}[\Delta \hat\bfv_{k-1}]_\times & \bfZo & \bfI_3 \\
        \end{bmatrix},
        \label{equ:Phi19}
        \end{equation} where \begin{align}
        \Delta\hat{\bfp}_{k-1} &={^G}\hat{\bfR}_{I_{k-1}}^\top( {^G}\hat\bfp_{I_k}-{^G}\hat\bfp_{I_{k-1}}- {^G}\hat\bfv_{I_k}+\frac{1}{2}\bfg\Delta t^2),\\
        \Delta\hat{\bfv}_{k-1} &={^G}\hat{\bfR}_{I_{k-1}}^\top( {^G}\hat\bfv_{I_k}-{^G}\hat\bfv_{I_{k-1}}+ \bfg{\Delta t}).
        \end{align} For details regarding the remaining blocks of $\bfPhi_\mathcal{I}$ and $\bfG_\mathcal{I}$, the interested reader is referred to \cite{yangAnalyticCombinedIMU2020}.
        
        \textbf{Visual Measurements:} When the camera observes the $j$-th feature at time $t_i$, the visual measurement can be expressed as: \begin{equation}
            \bfy_{ij} = \begin{bmatrix}
            \frac{x}{z} & \frac{y}{z}
            \end{bmatrix}^\top + \boldsymbol{\epsilon}_{ij}, \quad ^{C_i}\bfp_{f_j} = \left[ \begin{matrix}
            x & y & z
            \end{matrix}\right]^\top,
            \label{equ:visual_measure}
            \end{equation} where \begin{equation}
            ^{C_i}\bfp_{f_j} = {^C}\mathbf{R}_{I} {^G}\mathbf{R}^\top_{I_i} ({^G}\mathbf{p}_{f_j}-{^G}\mathbf{p}_{I_i}) + {^C}\mathbf{p}_{I},
            \label{eq:GpI}
            \end{equation} and $\boldsymbol{\epsilon}_{ij} \in \mathbb{R}^2$ denotes the measurement noise, while $({^C}\bfR_I,{^C}\bfp_I)$ correspond to the camera's extrinsic parameters.
            
            Linearizing $\bfy_{ij}$ yields the following linearized measurement model: \begin{equation}
            \tilde{\bfy}_{ij} = \bfH_{ij}\tilde{\bfx} + \boldsymbol{\epsilon}_{ij},\label{equ:less-measurement}
            \end{equation} where the Jacobian is defined as: \begin{equation}
            \setlength{\arraycolsep}{0.1cm}
            \bfH_{ij} = \frac{\partial \bfy_{ij}}{\partial ^{C_i}\bfp_{f_j}}\begin
            {bmatrix}
            \cdots & \frac{\partial ^{C_i}\bfp_{f_j}}{\partial {^G}\tilde{\bftheta}_{I_i}} & \frac{\partial ^{C_i}\bfp_{f_j}}{\partial {^G}\tilde{\bfp}_{I_i}} & \cdots & \frac{\partial ^{C_i}\bfp_{f_j}}{\partial {^G}\tilde{\bfp}_{f_j}} & \cdots 
            \end{bmatrix},
            \label{equ:15}
            \end{equation}
            \begin{align}
                \frac{\partial ^{C_i}\bfp_{f_j}}{\partial {^G}\tilde{\bftheta}_{I_i}} &= {^C}\mathbf{R}_{I} \big[{^G}\mathbf{R}^\top_{I_i} ({^G}\mathbf{p}_{f_j}-{^G}\mathbf{p}_{I_i} )\big]_\times, \\
            \frac{\partial ^{C_i}\bfp_{f_j}}{\partial {^G}\tilde{\bfp}_{f_j}} &= {^C}\mathbf{R}_{I} {^G}\mathbf{R}^\top_{I_i}, \quad
            \frac{\partial ^{C_i}\bfp_{f_j}}{\partial {^G}\tilde{\bfp}_{I_i}} = -{^C}\mathbf{R}_{I} {^G}\mathbf{R}^\top_{I_i}.
            \label{equ:partial}
            \end{align} It is important to note that the Jacobians $\bfPhi_{k-1}$ and $\bfH_{ij}$ depend on state estimates. In VINS estimators, they are typically evaluated at the current best estimate to minimize linearization error, which is a standard practice in nonlinear state estimation.

            \subsection{Observability Properties} 
            System observability refers to the capacity of a system to recover its initial states from all available measurements and is an inherent property of the system \cite{hermann1977nonlinear}. 
            The system unobservable subspace is the tangent space to the unobservable manifold, which is the set of states indistinguishable from a given point $\bfx$ by measurements. For the VINS model described in this section, its unobservable subspace has been widely studied and is spanned by the columns of $\mathbb{N}({\bfx})$: 
             \begin{equation}
            \setlength{\arraycolsep}{3pt}
            \mathbb{N}({\bfx}) = \begin{bmatrix}
            \bfNxyz & \bfNg({\bfx})
            \end{bmatrix} = \left[\begin{array}{cc}
            \bfZo_{3\times 3}& -{^G}{\bfR}_{I_\tau}^\top \bfg \\
            \bfI_3 & [{^G}{\bfp}_{I_\tau}]_\times \bfg\\
            \bfZo_{3\times 3} & [{^G}{\bfv}_{I_\tau}]_\times \bfg\\
            \bfZo_{3\times 3} & \bfZo_{3\times 1}\\
            \bfZo_{3\times 3} & \bfZo_{3\times 1}\\
            \hdashline
            \vdots & \vdots \\
            \bfZo_{3\times 3} & -{^G}{\bfR}_{I_i}^\top \bfg \\
            \bfI_3 & [{^G}{\bfp}_{I_i}]_\times \bfg\\
            \vdots & \vdots \\
            \hdashline
            \vdots & \vdots \\
            \bfI_3 & [{^G}{\bfp}_{f_{j}}]_\times \bfg\\
            \vdots & \vdots \\
            \end{array}\right],
            \label{equ:unobservable_subspace}
            \end{equation} where $\bfNxyz \in \mathbb{R}^{N\times 3}$ represents global position, while $\bfNg({\bfx}) \in \mathbb{R}^N$, depending on the state, corresponds to global yaw.
            The three blocks separated by dashed lines correspond to the current IMU, cloned IMUs, and features, respectively.
            For brevity, we refer to $\mathbb{N}({\bfx})$ simply as the unobservable subspace throughout this paper.

            The estimator's unobservable subspace differs from that of the system. The former pertains to the states that cannot be distinguished based on the information actually utilized by the estimator, whereas the latter assumes full utilization of all measurements. 
            The general expression of the estimator's unobservable subspace cannot be directly given because it depends on the specific estimator. 
            A powerful tool for its study is the estimator's information matrix, which is calculated from the Jacobians used by the estimator and accurately reflects the perceived uncertainty of its estimate.
            In particular, according to
            \cite{huang2011ObservabilityconstrainedSlidingWindow}, the estimator's unobservable subspace $\mathbf{N}$ is equal to the null space of the estimator's information matrix: 
            \begin{equation}
                \mathbf{N} = \texttt{null}(\bfLa).
               \label{equ:La_N}
            \end{equation}
            Note that \eqref{equ:La_N} requires the estimator to start without prior knowledge in the unobservable directions, since our interest lies in how spurious information is acquired along these directions.

            \subsection{Inconsistency of Standard Estimator}
            The filter-based VINS estimator utilizing the aforementioned VINS model, commonly referred to as the standard estimator, is known to suffer from inconsistency. Prior research \cite{huangAnalysisImprovementConsistency2008} has primarily attributed this issue to a dimensional mismatch between the unobservable subspace of the estimator and that of the underlying system:
            \begin{equation}
                \texttt{dim}({\textbf{N}})\neq \texttt{dim}(\bfN(\bfx)) = 4.
            \end{equation}
            In the following sections, we present a comprehensive analysis that considers not only the dimensionality but also the evaluation point of the estimator's unobservable subspace. Based on this broader perspective, we then present a simple yet effective paradigm for resolving the inconsistency.

\section{Unobservable Subspace Evolution} 

This section presents a novel analysis framework, named Unobservable Subspace Evolution (USE). In contrast to prior methods that rely on static observability matrices to assess consistency, USE uncovers the dynamic mechanisms underlying the formation of inconsistency. Specifically, to isolate and quantify the contribution of each estimation step to inconsistency, we first classify the unobservable subspaces of the estimator into three distinct statuses, and then regard estimation steps as actions that induce status evolution. Furthermore, we theoretically establish that the status evolutions form a Markov process. Most importantly, our analysis identifies observability misalignment as the antecedent of observability mismatch and pinpoints the specific steps that lead to it. This analysis of the deeper cause, along with the precise identification of critical steps, provides a theoretical foundation for the design of consistent estimators in the following section.

As a starting point, the analysis in this section is restricted to prediction and correction\footnote{The correction discussed here is also known as the SLAM correction/update or EKF correction/update, to distinguish it from the MSCKF correction/update.}. Other steps, such as MSCKF correction and delayed feature initialization, also satisfy the Markov property and will be discussed in detail in Section \ref{sec:VINS_Estimator}.

\label{sec:use}

\subsection{Status Evolution of Unobservable Subspace}
\label{sec:status_transition}
It is challenging to quantify the contribution of each estimation step to inconsistency in a time-sequence analysis framework because the estimation steps are continuously interleaved and interact with one another. 
To isolate the effect of each step from the influence of prior history, we 
divide the estimator’s unobservable subspace into distinct statuses and regard each estimation step as an action that triggers status evolution \footnote{To avoid confusion with system state variables, we use the term “status evolution” instead of “state transition” throughout this paper.}. 

The estimator's unobservable subspaces \eqref{equ:La_N} are categorized into the following three statuses based on their dimensions and evaluation points:
\begin{equation}
    \text{Status} = \left\{
    \begin{array}{ll}
        \text{Mismatched}, \quad & \text{if } \texttt{dim}(\mathbf{N}) \neq 4 ; \\
        \text{Aligned}, \quad & \text{else if } \mathbf{N} = \mathbb{N}(\hat{\bfx}^{*}); \\
        \text{Misaligned}, \quad &\text{otherwise};
    \end{array}    
    \right.
\end{equation}
where $\hat{\bfx}^{*}$ is the current estimate generated by the corresponding estimation step. Note that Mismatched is a problematic status recognized as the cause of inconsistency in previous work\cite{huangAnalysisImprovementConsistency2008}, while Aligned and Misaligned are newly defined in this study, and both possess the correct dimension.
Compared to prior studies that focus only on the dimension of the estimator's unobservable subspace, our analysis additionally incorporates its evaluation point. The subsequent analysis demonstrates that the evaluation point critically influences consistency. 

\begin{figure}[htbp]
    \centering
    \resizebox{0.47\textwidth}{!}{\begin{tikzpicture}[->,>=stealth,node distance=5cm,auto,thick]
      
    \definecolor{lightgreen}{RGB}{200, 230, 201}
    \definecolor{lightyellow}{RGB}{255, 249, 196}
    \definecolor{lightred}{RGB}{255, 205, 210}
    \definecolor{darkgreen}{RGB}{56, 142, 60}
    \definecolor{darkorange}{RGB}{239, 108, 0}
    \definecolor{darkred}{RGB}{198, 40, 40}
    \definecolor{predictionblue}{RGB}{33, 150, 243}  
    \definecolor{correctionpurple}{RGB}{156, 39, 176}  
    \definecolor{brightred}{RGB}{255, 0, 0}  
    
    \node[state, 
          fill=lightgreen, 
          draw=darkgreen, 
          line width=2pt,
          minimum size=2.5cm,
          font=\bfseries,
          ] (A) {Aligned};
           
    \node[state,
          right of=A,
          shape=circle,
          fill=lightyellow,
          draw=darkorange,
          line width=2pt,
          minimum size=2.5cm,
          font=\bfseries,
          ] (M) {Misaligned};
          
    \node[state,
          right of=M,
          shape=circle,
          fill=lightred,
          draw=darkred,
          line width=2pt,
          minimum size=2.5cm,
          font=\bfseries,
          ] (MM) {Mismatched};
  
    \path (A) edge[loop above, 
                   line width=1.2pt,
                   color=predictionblue] 
                   node[above, color=predictionblue, font=\bfseries]{Prediction} (A);
                   
    \path (M) edge[loop above,
                   line width=1.2pt,
                   color=predictionblue] 
                   node[above, color=predictionblue, font=\bfseries]{Prediction} (M);
                   
    \path (MM) edge[loop above,
                    line width=1.2pt,
                    color=predictionblue] 
                    node[above, color=predictionblue, font=\bfseries]{Prediction} (MM);
  
    \path (MM) edge[loop below,
                    line width=1.2pt,
                    color=correctionpurple] 
                    node[below, color=correctionpurple, font=\bfseries]{Correction} (MM);
  
    \path (A) edge[
                   above,
                   line width=1.2pt,
                   color=correctionpurple] 
                   node[above, color=correctionpurple, font=\bfseries]{Correction} (M);
  
    \path (M) edge[
                   above,
                   line width=1.2pt,
                   color=correctionpurple] 
                   node[above, color=correctionpurple, font=\bfseries]{Correction} (MM);
  
  
  
  \end{tikzpicture}}
    \caption{Status evolutions triggered by prediction and correction in the standard estimator.}
    \label{fig:state_transition}
    \vspace{-0.5cm}
\end{figure}
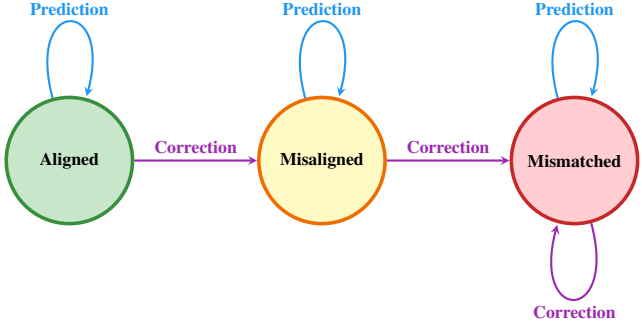

To ensure that the influence of each step on consistency is isolated, we theoretically show that the status evolutions satisfy the Markov property: the next status depends solely on the current status and the current estimation step, without reliance on historical steps, as illustrated in Fig.~\ref{fig:state_transition}.  
We summarize this property in Theorem~\ref{lamma:1}.

\begin{theorem}
The status evolution of the unobservable subspace exhibits the Markov property with respect to estimation steps, a relationship modeled by:
    \begin{subequations}
    \begin{align}
    & \delta(\text{Aligned},\text{Prediction})=\text{Aligned}, \label{equ:20a} \\
    & \delta(\text{Aligned},\text{Correction})=\text{Misaligned}, \label{equ:20b} \\
    & \delta(\text{Misaligned},\text{Correction})=\text{Mismatched}, \label{equ:20c} \\
    & \delta(\text{Misaligned},\text{Prediction})=\text{Misaligned}, \\
    & \delta(\text{Mismatched},\text{Prediction})=\text{Mismatched}, \\
    & \delta(\text{Mismatched},\text{Correction})=\text{Mismatched},
    \end{align}
    \end{subequations}    
    where $\delta(s,a)$ indicates that given the current status $s$ and estimation step $a$, the estimator evolves to a new status $\delta(s,a)$.
\label{lamma:1}
\end{theorem}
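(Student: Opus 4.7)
I would prove Theorem~\ref{lamma:1} by translating each of the six transitions into a linear-algebraic statement about how $\mathbf{N} = \texttt{null}(\bfLa)$ evolves under one estimation step, then combining this with the explicit structural form of $\mathbb{N}(\bfx)$ in \eqref{equ:unobservable_subspace} to read off the new status. Because each transition is derived from only the current null space and the current step's Jacobian---not from any prior history---the Markov property emerges automatically from the case analysis.

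\textbf{Null-space calculus and structural identities.} I rely on two workhorse lemmas. From the covariance-form prediction update $\bfP_k^- = \bfPhi_{k-1}\bfP_{k-1}^+\bfPhi_{k-1}^\top + \bfG_{k-1}\bfQ\bfG_{k-1}^\top$, together with the fact that $\bfG_{k-1}\bfQ\bfG_{k-1}^\top$ has no support along the unobservable directions (global position and yaw) encoded by $\mathbb{N}$, I derive $\mathbf{N}_k^- = \bfPhi_{k-1}\,\mathbf{N}_{k-1}^+$. From the additive information update $\bfLa^+ = \bfLa^- + \bfH^\top\bfR^{-1}\bfH$, I obtain $\mathbf{N}^+ = \mathbf{N}^- \cap \texttt{null}(\bfH)$. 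Two structural identities then tie these updates to $\mathbb{N}$ at the evaluation points: because $\bfPhi_{k-1}$ in \eqref{equ:Phi19} is evaluated consistently at $(\hat{\bfx}_{k-1}^+,\hat{\bfx}_k^-)$ linked by the noise-free IMU dynamics, it maps $\mathbb{N}(\hat{\bfx}_{k-1}^+)$ bijectively onto $\mathbb{N}(\hat{\bfx}_k^-)$; and because $\bfH$ in \eqref{equ:15} is evaluated at $\hat{\bfx}^-$, it satisfies $\bfH(\hat{\bfx}^-)\,\mathbb{N}(\hat{\bfx}^-) = \bfZo$.

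\textbf{Case analysis.} For \eqref{equ:20a}, if $\mathbf{N}_{k-1}^+ = \mathbb{N}(\hat{\bfx}_{k-1}^+)$ then $\mathbf{N}_k^- = \bfPhi_{k-1}\mathbb{N}(\hat{\bfx}_{k-1}^+) = \mathbb{N}(\hat{\bfx}_k^-)$, matching the new current estimate, hence Aligned. For \eqref{equ:20b}, if $\mathbf{N}^- = \mathbb{N}(\hat{\bfx}^-)$ then $\bfH(\hat{\bfx}^-)$ annihilates $\mathbf{N}^-$, so $\mathbf{N}^+ = \mathbb{N}(\hat{\bfx}^-)$ retains dimension $4$ but is evaluated at the stale pre-update point relative to the new $\hat{\bfx}^+$, hence Misaligned. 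For \eqref{equ:20c}, if $\mathbf{N}^- = \mathbb{N}(\hat{\bfx}^{\mathrm{old}})$ with $\hat{\bfx}^{\mathrm{old}}\neq \hat{\bfx}^-$, then $\bfH(\hat{\bfx}^-)\bfNg(\hat{\bfx}^{\mathrm{old}})$ acquires a nonzero yaw projection from the explicit rotation and position dependence of $\bfNg$ in \eqref{equ:unobservable_subspace}, so $\mathbf{N}^+$ has strictly smaller dimension, yielding Mismatched. The three self-loops follow immediately: prediction preserves dimension through invertible $\bfPhi_{k-1}$ while leaving the evaluation point distinct from the new estimate, and correction can only shrink an already mismatched null space.

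\textbf{Main obstacle.} The hardest step will be \eqref{equ:20c}: establishing that a misaligned correction \emph{strictly} reduces the null-space dimension rather than merely doing so in special cases. I plan to attack this by showing that $\bfH(\hat{\bfx}^-)\bfNg(\hat{\bfx}^{\mathrm{old}}) = \bfZo$ would force an algebraic identity among the accumulated rotation and position differences that is generically violated whenever at least one observed feature is not collinear with the gravity axis---a condition ensured in any practical VINS deployment. A secondary subtlety is the handling of $\bfLa$ when it is singular along $\mathbf{N}$; I would perform the null-space calculus in covariance form, propagating the unobservable block through $\bfPhi_{k-1}$ directly, so that pseudo-inverses in the information formulation need not be manipulated.
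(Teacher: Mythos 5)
Your proposal is correct and follows essentially the same route as the paper's proof: it uses $\mathbf{N}=\texttt{null}(\bfLa)$, propagates the null space through $\bfPhi_{k-1}$ for prediction (together with the structural fact that $\bfPhi_{k-1}\mathbb{N}(\hat\bfx_{k-1}^{\smalloplus})=\mathbb{N}(\hat\bfx_{k}^{\smallominus})$), intersects with $\texttt{null}(\bfH)$ for correction using $\bfH(\hat\bfx^{\smallominus})\mathbb{N}(\hat\bfx^{\smallominus})=\bfZo$, and invokes the same genericity argument that a Jacobian evaluated at a different point does not annihilate $\bfNg$, so the misaligned subspace collapses to $\bfNxyz$. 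The only differences are cosmetic: you work the prediction step in covariance form rather than the paper's information-form manipulation, and you sketch the three self-loop transitions that the paper dismisses as ``proved similarly.''
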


\begin{proof}
    To save space, we present the proof for only the first three equations; the remaining three can be proved similarly. 
 The proofs for these status evolutions follow the same analytical trajectory. Since the information matrix is orthogonal to the estimator's unobservable subspace, we first analyze how the information matrix evolves during estimation (see \eqref{equ:la_prediction}, \eqref{equ:La_update}, and \eqref{equ:31}). This analysis then helps infer the status evolution of the estimator's unobservable subspace (see \eqref{equ:N_prediction}, \eqref{equ:N_correction}, and \eqref{equ:N_correction2}).

    \textbf{Proof of \eqref{equ:20a}:}
    Denote   $(\hat{\bfx}_{k-1}^{\smalloplus},{\bfLa}_{k-1}^{\smalloplus})$ and $(\hat{\bfx}_k^{\smallominus},{\bfLa}_k^{\smallominus})$ as the estimates before and after the prediction, respectively.
    Since the unobservable subspace is Aligned at the beginning of the prediction, we have
    \begin{equation}
    \mathbf{N}_{k-1}^{\smalloplus} = \bfN(\hat\bfx _{k-1}^{\smalloplus} ).
         \label{equ:La_N_k_1}
    \end{equation}
    During prediction, the information is propagated as:
    \begin{subequations}
        \begin{align}
            \bfLa_{k}^{\smallominus} &= 
                \left(\bfQ_{k-1} + \bfPhi_{k-1} {\bfLa_{k-1}^{\smalloplus}}^{-1} \bfPhi_{k-1}^{\top}\right)^{-1}
            \\
            &= \underbrace{\bfQ^{-1}_{k-1} \big(\bfQ^{-1}_{k-1} + \bfPhi_{k-1}^{-\top} \bfLa_{k-1}^{\smalloplus} \bfPhi_{k-1}^{-1}\big)^{-1} \bfPhi_{k-1}^{-\top}}_{\bfZ} \bfLa_{k-1}^{\smalloplus} \bfPhi_{k-1}^{-1}, \\
            & = \bfZ  \bfLa_{k-1}^{\smalloplus} \bfPhi_{k-1}^{-1},
            \label{equ:La_N_k}
        \end{align}
        \label{equ:la_prediction}
    \end{subequations}
    where 
    \begin{equation} 
        \bfQ_{k-1} = \bfG_{k-1}\mathbb{E}(\bfn_{k-1}\bfn_{k-1}^\top) \bfG_{k-1}^\top.
    \end{equation}
    Combining  \eqref{equ:La_N}, \eqref{equ:La_N_k_1}, and  \eqref{equ:La_N_k}, we have:
    \begin{equation}
        \bfLa_{k}^{\smallominus} \bfPhi_{k-1}\bfN(\hat\bfx _{k-1}^{\smalloplus} ) = \bfZo.
        \label{equ:25}
    \end{equation}
    Then we can see that after the prediction step, the unobservable subspace is evaluated at the current estimate, i.e., Aligned: 
    \begin{subequations}
        \begin{align}
            \mathbf{N}_{k}^{\smallominus} &\overset{\eqref{equ:La_N}} {=} \texttt{null}(\bfLa_{k}^{\smallominus})\\
            & \overset{\eqref{equ:25}} {=} \bfPhi_{k-1}\bfN(\hat\bfx _{k-1}^{\smalloplus} ) \\
            & \overset{\eqref{equ:Phi19}}{=}\bfN(\hat\bfx _{k}^{\smallominus} ) \label{equ:La_N_prop}
        \end{align}
        \label{equ:N_prediction}
    \end{subequations}

    \textbf{Proof of \eqref{equ:20b}:}
    At the beginning of the correction, the unobservable subspace is Aligned, given by \eqref{equ:La_N_prop}.  
    During the correction, the estimator utilizes the measurements $\bfz_k$ to correct
    the estimate from $(\hat{\bfx}_k^{\smallominus},{\bfLa}_k^{\smallominus})$ to $(\hat{\bfx}_k^{\smalloplus},{\bfLa}_k^{\smalloplus})$ where 
    \begin{equation}
        \bfLa_{k}^{\smalloplus} = \bfLa_{k}^{\smallominus} + {\bfH_{k}}^\top \bfR_{k}^{-1} \bfH_{k},
        \label{equ:La_update}
    \end{equation}
     and $\bfz_k$, $\bfH_{k}$, and $\bfR_{k}$ are the row/row/diagonal stacks of $\{\dots, \bfz_{ij}, \dots\}$, $\{\dots, \bfH_{ij}, \dots\}$, and $\{\dots, \bfR_{ij}, \dots\}$, respectively.
    Since $
    \bfH_{k}$ is evaluated at $\hat{\bfx}_{k}^{\smallominus}$, we have:
    \begin{equation}
        \bfH_{k}\mathbb{N}(\hat{\bfx}_{k}^{\smallominus})=\bfZo.
        \label{equ:28}
    \end{equation}
    Then, we can find that the unobservable subspace is still evaluated at the previous estimate $\hat\bfx_{k}^{\smallominus}$:
    \begin{subequations}
        \begin{align}
                   \mathbf{N}_{k}^{\smalloplus} &\overset{\eqref{equ:La_N}} {=} \texttt{null}(\bfLa_{k}^{\smalloplus})\\
                   &\overset{\eqref{equ:La_update}}{=} \mathbb{N}(\hat{\bfx}_{k}^{\smallominus}) \cap  \texttt{null} ({\bfH_{k}}^\top \bfR_{k}^{-1} \bfH_{k}) \\
                   &\overset{\eqref{equ:28}} {=} \mathbb{N}(\hat{\bfx}_{k}^{\smallominus}) \label{equ:La_N_Acc}
        \end{align}
        \label{equ:N_correction}%
    \end{subequations}
    which indicates that after the correction step, the unobservable subspace still has four unobservable directions, but is not evaluated at the current estimate, i.e., Misaligned.
 
    \textbf{Proof of \eqref{equ:20c}:}
    Denote $(\hat{\bfx}_k^{\smalloplus},{\bfLa}_k^{\smalloplus})$ and $(\hat{\bfx}_k^{\smalloplus\smalloplus},{\bfLa}_k^{\smalloplus\smalloplus})$ as the estimates before and after the correction with the measurement $\bfz_k'$, respectively.
    Since the unobservable subspace is Misaligned at the beginning, we 
    have \begin{equation}
         \mathbf{N}_k^{\smalloplus} =  \bfN(\hat\bfx _{k}^\text{M} ), 
    \end{equation}
    where $\hat\bfx _{k}^\text{M} \neq \hat\bfx _{k}^{\smalloplus}$. 
    During the correction, the information matrix is corrected as 
\begin{equation}
    \bfLa_{k}^{\smalloplus\smalloplus} = \bfLa_{k}^{\smalloplus} + {\bfH_{k}'}^\top \bfR_{k}'^{-1} \bfH_{k}',
    \label{equ:31}
\end{equation}
where $\bfH_{k}'$ and $\bfR_{k}'$ are the Jacobian and noise matrix corresponding to $\bfz_k'$, respectively. Because $\bfH_{k}'$ is evaluated at $\hat{\bfx}_{k}^{\smalloplus}$ and $ \hat\bfx _{k}^{\smalloplus} \neq \hat\bfx _{k}^\text{M}$, there is (almost) zero probability in practice that ${\bfH_{k}'}  \bfNg(\hat\bfx_{k}^\text{M}) = \bfZo$ holds (see \eqref{equ:15} and \eqref{equ:unobservable_subspace}). Therefore, we have
\begin{equation}
    \bfNg(\hat\bfx_{k}^{\text{M}}) \cap \texttt{null}({\bfH_{k}'}^\top \bfR_{k}'^{-1} \bfH_{k}') = \{\bfZo\},
    \label{equ:32}
\end{equation}
and 
\begin{subequations}
    \begin{align}
        \mathbf{N}_k^{\smalloplus\smalloplus} & \overset{\eqref{equ:La_N}} {=} \texttt{null}(\bfLa_{k}^{\smalloplus\smalloplus}) \\
        & \overset{\eqref{equ:31}}{=} \bfN(\hat\bfx _{k}^\text{M} ) \cap \texttt{null}({\bfH_{k}'}^\top \bfR_{k}'^{-1} \bfH_{k}') \\
        & \overset{\eqref{equ:unobservable_subspace}}{=} \bfNxyz \cup \bfNg(\hat\bfx_{k}^{\text{M}}) \cap \texttt{null}({\bfH_{k}'}^\top \bfR_{k}'^{-1} \bfH_{k}') \\
        & \overset{\eqref{equ:32}}{=} \bfNxyz. \label{equ:collapse}
    \end{align}
    \label{equ:N_correction2}%
\end{subequations}
\eqref{equ:collapse} indicates that the unobservable subspace collapses to three dimensions after the correction, i.e., Mismatched. Specifically, $\bfNg$ is no longer in the null space of the information matrix, indicating that global yaw falsely becomes observable in the estimator.
\end{proof}

The above proof can be visualized using the quadratic cost function \(\mathbf{J}_{\hat{\mathbf{x}}}(\mathbf{x}) = (\mathbf{x} - \hat{\mathbf{x}})^\top \mathbf{\Lambda} (\mathbf{x} - \hat{\mathbf{x}})\), as shown in Fig. \ref{fig:illumiation}, where the system dimension is reduced to two with one unobservable direction for visualization purposes. If the state is unobservable, the inequality \(\mathbf{J}_{\hat{\mathbf{x}}}(\mathbf{x}) < c\) forms a cylinder extending infinitely along the unobservable direction, as illustrated in Figs. \ref{fig:illumiation}(b)-(d). When the state erroneously becomes observable, the cylinder degenerates into an ellipsoid, as shown in Fig. \ref{fig:illumiation}(e).

\begin{figure}[!t]
    \centering
    \includegraphics[width=0.485\textwidth]{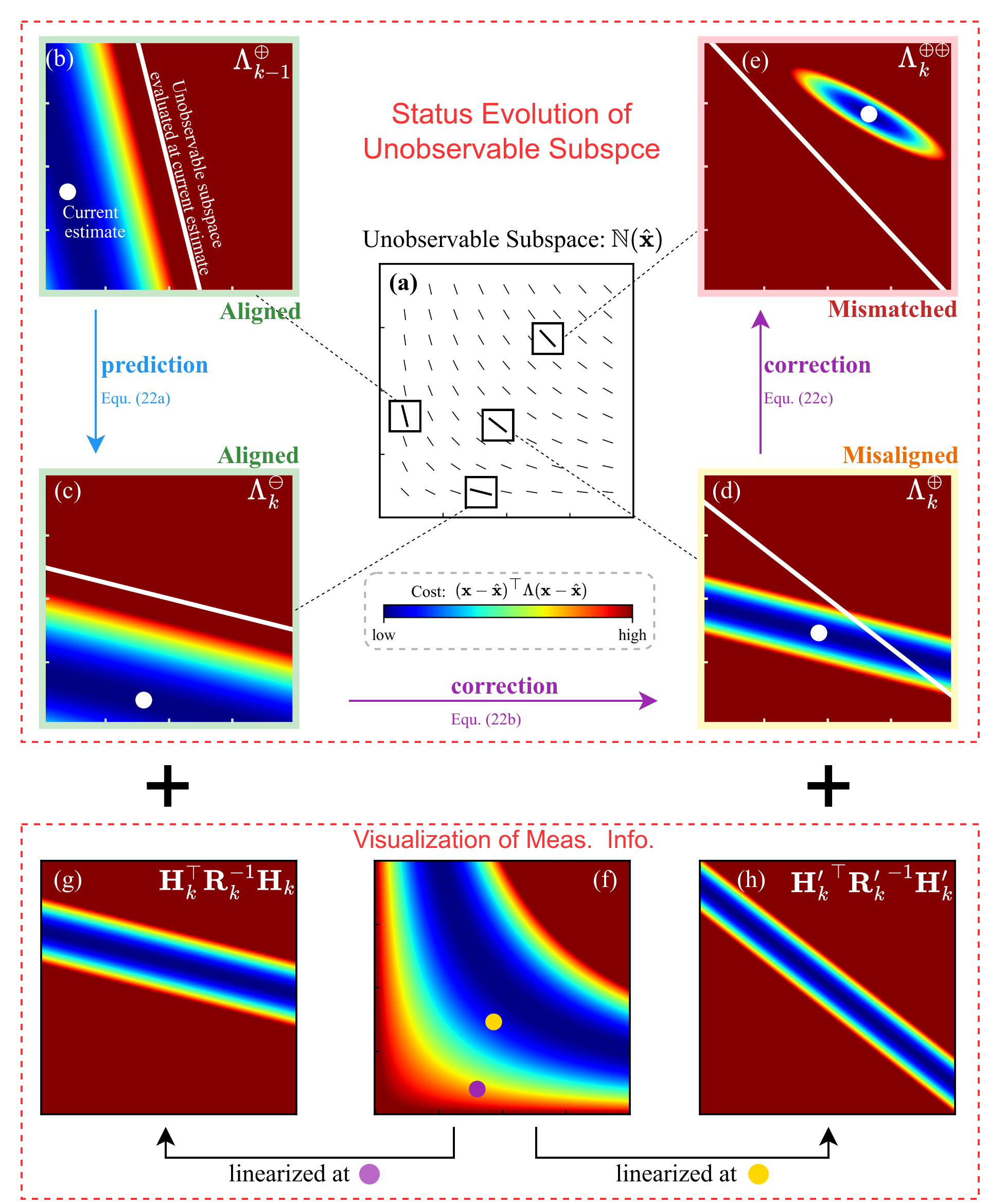}
    \caption{A visual illustration for proof of Theorem \ref{lamma:1}.
    (a) is the state-dependent unobservable subspace where the Aligned unobservable subspaces are evaluated at the current estimates. (b)-(c): After the prediction step, the unobservable subspace is evaluated at the current estimate, i.e., Aligned. 
    (c)-(d): However, after the correction step, the unobservable subspace is still evaluated at the previous estimate, i.e., Misaligned. (d)-(e): The observability misalignment eventually leads to a reduction of the unobservable dimension, i.e., Mismatched. 
    (f): Visualization of nonlinear measurement. (g) and (h): Linearized information evaluated at $\hat{\bfx}_{k}^{\smallominus}$ and $\hat{\bfx}_{k}^{\smalloplus}$, respectively.}
    \label{fig:illumiation}
\end{figure}
\subsection{Dynamic Mechanism of Inconsistency Formation}
Previous studies primarily attributed inconsistency in VINS to a dimensional mismatch within the unobservable subspace. Our analysis, based on the USE framework, uncovers a deeper dynamic mechanism: \textbf{observability misalignment precedes and induces mismatch.}
Theorem \ref{lamma:1} demonstrates that any estimator that eventually becomes Mismatched must first pass through an intermediate status of {Misaligned}. In other words, perturbations in the evaluation point of the subspace can diminish its dimension, thereby leading to inconsistency.
This insight advances prior, incomplete explanations by emphasizing the decisive role of the subspace evaluation point in shaping inconsistency. Moreover, it establishes the theoretical foundation for the subsequent USA paradigm.

However, not every estimation step necessarily induces misalignment (for instance, prediction does not). In practical VINS estimators, beyond prediction and correction, additional operations such as state augmentation, marginalization, and delayed feature initialization are also employed. This raises a natural question: do the estimation steps that cause misalignment exhibit common, identifiable characteristics that enable us to recognize them systematically?
The following theorem provides a criterion for determining whether a given step introduces observability misalignment.
\begin{theorem}
    
    An estimation step will cause the status to evolve from {Aligned} to {Misaligned} if the following conditions hold:
    \begin{itemize}
        \item It corrects the estimate from  $\small (\hat{\bfx}^{\smallominus}_k, \bfLa^{\smallominus}_k)$ to $\small (\hat{\bfx}^{\smalloplus}_k, \bfLa^{\smalloplus}_k)$;
        \item $\bfN(\hat{\bfx}^{\smallominus}_k) \neq \bfN(\hat{\bfx}^{\smalloplus}_k)$.
    \end{itemize}
    In this case, the resulting Misaligned unobservable subspace is evaluated at the previous estimate $\hat{\bfx}^{\smallominus}_k$.
    \label{the:2}
\end{theorem}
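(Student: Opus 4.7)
The plan is to mirror the three-step analytical trajectory used in the proof of \eqref{equ:20b}, which already established that a standard EKF correction maps Aligned to Misaligned. The generalization here is to abstract away from the specific form of the correction, keeping only the essential features: the step updates both the mean and the information matrix using Jacobians evaluated at the pre-step estimate, and it produces a new estimate whose system unobservable subspace differs from the pre-step one.

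First, I would exploit the Aligned precondition at the start of the step to write $\mathbf{N}^{\smallominus}_k = \bfN(\hat{\bfx}^{\smallominus}_k)$ via \eqref{equ:La_N}, so that $\texttt{null}(\bfLa^{\smallominus}_k)$ is explicitly spanned by the columns of $\bfN$ evaluated at $\hat{\bfx}^{\smallominus}_k$. Next, I would invoke the canonical information-form update $\bfLa^{\smalloplus}_k = \bfLa^{\smallominus}_k + \bfH_k^\top \bfR_k^{-1} \bfH_k$ (analogous to \eqref{equ:La_update}), where $\bfH_k$ collects the Jacobians of the measurement-like quantities driving the correction. Since the paper's convention is that Jacobians are linearized at the current best estimate before the step, $\bfH_k$ is evaluated at $\hat{\bfx}^{\smallominus}_k$, which guarantees the annihilation property $\bfH_k \bfN(\hat{\bfx}^{\smallominus}_k) = \bfZo$ by definition of the system unobservable subspace.

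From here, the intersection computation proceeds exactly as in \eqref{equ:N_correction}: the null space of a sum of positive-semidefinite matrices equals the intersection of their individual null spaces, so $\mathbf{N}^{\smalloplus}_k = \bfN(\hat{\bfx}^{\smallominus}_k) \cap \texttt{null}(\bfH_k^\top \bfR_k^{-1} \bfH_k) = \bfN(\hat{\bfx}^{\smallominus}_k)$. The final step is to classify this outcome against the three-status definition: the dimension is preserved at four, so the status is not Mismatched; and by the hypothesis $\bfN(\hat{\bfx}^{\smallominus}_k) \neq \bfN(\hat{\bfx}^{\smalloplus}_k)$, the resulting subspace is not evaluated at the post-step estimate. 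Hence by definition the status is Misaligned, with evaluation point $\hat{\bfx}^{\smallominus}_k$, which also delivers the ``in this case'' clause of the theorem.

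The main obstacle I anticipate is the implicit generality of what qualifies as a ``correction step.'' The additive information-matrix update is native to standard EKF innovation-based corrections, but the theorem is intended to cover MSCKF correction, delayed feature initialization, and related non-standard steps that involve nullspace projections, state augmentation, or Schur-complement manipulations. I would therefore need to either (i) argue that each such step can be algebraically reduced to the canonical information-form update after an appropriate reformulation, possibly by introducing a reduced Jacobian whose row space captures the information actually injected, or (ii) state the hypothesis directly at the level of an abstract Jacobian $\bfH_k$ evaluated at $\hat{\bfx}^{\smallominus}_k$ whose outer product is added to the information. Once that reduction is established, the rest of the argument is essentially a transcription of the proof of \eqref{equ:20b}, with the new hypothesis $\bfN(\hat{\bfx}^{\smallominus}_k) \neq \bfN(\hat{\bfx}^{\smalloplus}_k)$ replacing the specific evaluation-point-drift argument used there.
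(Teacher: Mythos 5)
Your proposal is correct and takes essentially the same route as the paper's own proof, which likewise argues that because both the Jacobians and the null space of $\bfLa^{\smallominus}_k$ are evaluated at $\hat{\bfx}^{\smallominus}_k$, the null space of $\bfLa^{\smalloplus}_k$ remains evaluated at $\hat{\bfx}^{\smallominus}_k$, and the hypothesis $\bfN(\hat{\bfx}^{\smallominus}_k) \neq \bfN(\hat{\bfx}^{\smalloplus}_k)$ then yields the Misaligned status (your write-up simply makes the additive information update and annihilation step explicit). The generality concern you raise about non-standard steps is handled in the paper exactly as you anticipate: the annihilation property is verified case-by-case later, e.g., for MSCKF correction and delayed feature initialization.
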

\begin{proof}
  Since both the Jacobians and the null space of $\bfLa^{\smallominus}_k$ are evaluated at the previous estimate $\hat{\bfx}^{\smallominus}_k$, the null space of $\bfLa^{\smalloplus}_k$ is likewise computed with respect to $\hat{\bfx}^{\smallominus}_k$. As a result, the estimator’s unobservable subspace continues to be evaluated at the previous estimate. Because the unobservable subspaces evaluated at the previous and current estimates are not equivalent, the unobservable subspace becomes {Misaligned}.
\end{proof}
It is worth noting that the prediction step does not satisfy the first condition, because the state estimates before and after prediction correspond to different timestamps. Similarly, state augmentation and marginalization also fail to meet the first condition, as they merely restructure the estimator’s state representation rather than performing a correction. For the estimators described in Section~\ref{sec:method2}, their unobservable subspaces are fully constant. Consequently, the second condition never holds, thereby ensuring consistency.

\section{Unobservable Subspace Alignment}
\label{sec:usa}

This section proposes a paradigm named Unobservable Subspace Alignment (USA) to address the inconsistency issue.
USA can be introduced as an additional step incorporated into existing estimators. Specifically, it is applied immediately when the status evolves to Misaligned, thereby preventing the status from further evolving to Mismatched, as shown in Fig. \ref{fig:state_transition2}.
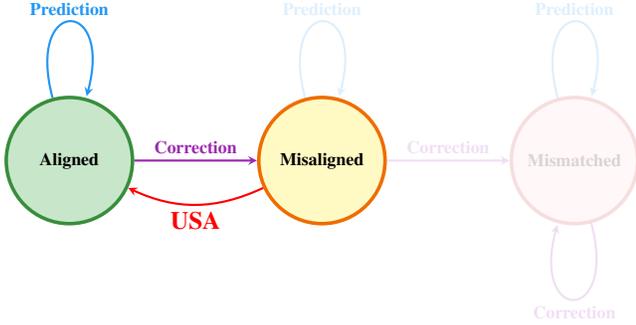
\begin{figure}[t]
    \centering
    \resizebox{0.47\textwidth}{!}{\begin{tikzpicture}[->,>=stealth,node distance=5cm,auto,thick]
      
    \definecolor{lightgreen}{RGB}{200, 230, 201}
    \definecolor{lightyellow}{RGB}{255, 249, 196}
    \definecolor{lightred}{RGB}{255, 205, 210}
    \definecolor{darkgreen}{RGB}{56, 142, 60}
    \definecolor{darkorange}{RGB}{239, 108, 0}
    \definecolor{darkred}{RGB}{198, 40, 40}
    \definecolor{predictionblue}{RGB}{33, 150, 243}  
    \definecolor{correctionpurple}{RGB}{156, 39, 176}  
    \definecolor{brightred}{RGB}{255, 0, 0}  
    
    \node[state, 
          fill=lightgreen, 
          draw=darkgreen, 
          line width=2pt,
          minimum size=2.5cm,
          font=\bfseries,
          ] (A) {Aligned};
          
    \node[state,
          right of=A,
          shape=circle,
          fill=lightyellow,
          draw=darkorange,
          line width=2pt,
          minimum size=2.5cm,
          font=\bfseries,
          ] (M) {Misaligned};
          
    \node[state,
          right of=M,
          opacity=0.15,
          shape=circle,
          fill=lightred,
          draw=darkred,
          line width=2pt,
          minimum size=2.5cm,
          font=\bfseries,
          ] (MM) {Mismatched};
  
    \path (A) edge[loop above, 
                   line width=1.2pt,
                   color=predictionblue] 
                   node[above, color=predictionblue, font=\bfseries]{Prediction} (A);
                   
    \path (M) edge[loop above,
                   line width=1.2pt,
                   opacity=0.15,
                   color=predictionblue] 
                   node[above, color=predictionblue, font=\bfseries]{Prediction} (M);
                   
    \path (MM) edge[loop above,
                    line width=1.2pt,
                   opacity=0.15,
                    color=predictionblue] 
                    node[above, color=predictionblue, font=\bfseries]{Prediction} (MM);
  
    \path (MM) edge[loop below,
                    line width=1.2pt,
                   opacity=0.15,
                    color=correctionpurple] 
                    node[below, color=correctionpurple, font=\bfseries]{Correction} (MM);
  
    \path (A) edge[
                   above,
                   line width=1.2pt,
                   color=correctionpurple] 
                   node[above, color=correctionpurple, font=\bfseries]{Correction} (M);
  
    \path (M) edge[
                   above,
                   opacity=0.15,
                   line width=1.2pt,
                   color=correctionpurple] 
                   node[above, color=correctionpurple, font=\bfseries]{Correction} (MM);
  
    \path (M) edge[bend left=25,
                   below,
                   line width=1.2pt,  
                   color=brightred] 
                   node[below, color=brightred, font=\Large\bfseries]{USA} (A);
  
  
  \end{tikzpicture}}
    \caption{Status evolutions when USA is integrated into the estimator. 
    Some evolutions originating from the Misaligned and Mismatched are shown as semi-transparent because they do not occur.}
    \label{fig:state_transition2}
\end{figure}
In this section, we present two methods to realize USA: transformation and re-evaluation.
The former is more general and can be applied to all estimators, while the latter is applicable to specific cases but is more accurate.

\subsection{Transformation-based USA}
Let the estimates before and after a step be $(\hat{\bfx}^{\smallominus}, \bfLa^{\smallominus})$ and $(\hat{\bfx}^{\smalloplus}_k, \bfLa^{\smalloplus}_k)$, respectively\footnote{Here, we omit the time subscript $k$ for brevity, as all the variables and matrices refer to the same timestamp.}. If this step induces misalignment, it can be eliminated using the following method:
\begin{theorem}
    The misalignment is eliminated solely by applying a nonsingular matrix $\mathbf{T} \in \mathbb{R}^{N \times N}$ to the information matrix: 
    \begin{align}
    \bfLa^{\smalloplus} \gets  \bfT^\top \bfLa^{\smalloplus} \bfT,\label{equ:La_align} 
\end{align} 
where \begin{equation}
    \bfT = \mathbb{N}(\hat\bfx^{\smallominus}) \mathbb{N}(\hat\bfx^{\smalloplus})^{\dagger}+\bfM\left(\bfI- \mathbb{N}(\hat\bfx^{\smalloplus})  \mathbb{N}(\hat\bfx^{\smalloplus})^{\dagger}\right),
    \label{equ:general_solution}
\end{equation}
$
    \bfN(\hat\bfx^{\smalloplus})^{\dagger} = \left(  \bfN(\hat\bfx^{\smalloplus})^\top  \bfN(\hat\bfx^{\smalloplus}) \right)^{-1} \bfN(\hat\bfx^{\smalloplus})^\top,
$ and  $\bfM \in \mathbb{R}^{N\times N}$ is an arbitrary matrix.
\label{the:3}
\end{theorem}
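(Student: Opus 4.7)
The plan is to verify directly that after the update \eqref{equ:La_align}, the null space of the new information matrix equals $\text{range}(\mathbb{N}(\hat\bfx^{\smalloplus}))$, which is exactly the Aligned status at the current estimate. The proof will consist of one algebraic identity about $\bfT$, an appeal to the Misaligned hypothesis, and a dimension count to upgrade a containment to an equality.

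The key computation is
\[
\bfT\,\mathbb{N}(\hat\bfx^{\smalloplus}) = \mathbb{N}(\hat\bfx^{\smallominus}),
\]
obtained by substituting \eqref{equ:general_solution} and exploiting the fact that, since the four columns of $\mathbb{N}(\hat\bfx^{\smalloplus})$ are linearly independent, the explicit pseudoinverse $\mathbb{N}(\hat\bfx^{\smalloplus})^{\dagger} = (\mathbb{N}(\hat\bfx^{\smalloplus})^\top \mathbb{N}(\hat\bfx^{\smalloplus}))^{-1}\mathbb{N}(\hat\bfx^{\smalloplus})^\top$ is a left inverse, i.e., $\mathbb{N}(\hat\bfx^{\smalloplus})^{\dagger}\mathbb{N}(\hat\bfx^{\smalloplus}) = \bfI_4$. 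This collapses the first term of $\bfT$ to $\mathbb{N}(\hat\bfx^{\smallominus})$ and annihilates the $\bfM$-dependent term, so the identity holds irrespective of $\bfM$, which clarifies why $\bfM$ may be arbitrary. Combining this with the Misaligned hypothesis, which via Theorem \ref{the:2} and \eqref{equ:La_N} yields $\bfLa^{\smalloplus}\mathbb{N}(\hat\bfx^{\smallominus}) = \bfZo$, gives
\[
(\bfT^\top \bfLa^{\smalloplus} \bfT)\,\mathbb{N}(\hat\bfx^{\smalloplus}) = \bfT^\top \bfLa^{\smalloplus}\,\mathbb{N}(\hat\bfx^{\smallominus}) = \bfZo,
\]
so $\text{range}(\mathbb{N}(\hat\bfx^{\smalloplus})) \subseteq \texttt{null}(\bfT^\top \bfLa^{\smalloplus} \bfT)$.

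To promote this containment to an equality I would invoke the nonsingularity of $\bfT$: left multiplication by $\bfT^\top$ and right multiplication by $\bfT$ preserve rank, so $\texttt{dim}\,\texttt{null}(\bfT^\top \bfLa^{\smalloplus} \bfT) = \texttt{dim}\,\texttt{null}(\bfLa^{\smalloplus}) = 4$, which matches the dimension of the inclusion already established and forces equality. Since the state estimate $\hat\bfx^{\smalloplus}$ is untouched by \eqref{equ:La_align}, this upgraded equality is precisely the defining condition of the Aligned status at $\hat\bfx^{\smalloplus}$.

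The main obstacle is not the alignment identity itself but the verification that $\bfT$ is indeed nonsingular. Decomposing $\mathbb{R}^N = \text{range}(\mathbb{N}(\hat\bfx^{\smalloplus})) \oplus \text{range}(\mathbb{N}(\hat\bfx^{\smalloplus}))^{\perp}$ through the orthogonal projector $\mathbb{N}(\hat\bfx^{\smalloplus})\mathbb{N}(\hat\bfx^{\smalloplus})^{\dagger}$, one sees that $\bfT$ maps the first summand bijectively onto $\text{range}(\mathbb{N}(\hat\bfx^{\smallominus}))$ while acting as $\bfM$ restricted to the second summand. Nonsingularity therefore reduces to requiring $\text{range}(\mathbb{N}(\hat\bfx^{\smallominus})) \oplus \bfM\,\text{range}(\mathbb{N}(\hat\bfx^{\smalloplus}))^{\perp} = \mathbb{R}^N$, which is generically satisfied; the canonical choice $\bfM = \bfI$ secures it whenever $\text{range}(\mathbb{N}(\hat\bfx^{\smallominus}))$ and $\text{range}(\mathbb{N}(\hat\bfx^{\smalloplus}))$ are non-orthogonal, a condition that holds in practice because both subspaces arise from small estimate updates of one another.
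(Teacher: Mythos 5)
Your proposal is correct and rests on the same central identity as the paper's proof, namely $\bfT\,\mathbb{N}(\hat\bfx^{\smalloplus})=\mathbb{N}(\hat\bfx^{\smallominus})$ combined with the fact that the congruence \eqref{equ:La_align} maps the null space of $\bfLa^{\smalloplus}$ by $\bfT^{-1}$; the difference is in how each side closes the argument. The paper derives the alignment requirement $\mathbb{N}(\hat\bfx^{\smalloplus})=\bfT^{-1}\mathbb{N}(\hat\bfx^{\smallominus})$ from Theorem~\ref{the:2} and then simply cites the general solution of the linear matrix equation $\bfA=\bfX^{-1}\bfB$ (Mitra) to produce \eqref{equ:general_solution}, leaving the null-space transformation rule and the invertibility of $\bfT$ as unstated facts. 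You instead verify sufficiency directly: the left-inverse property of $\mathbb{N}(\hat\bfx^{\smalloplus})^{\dagger}$ collapses $\bfT\,\mathbb{N}(\hat\bfx^{\smalloplus})$ to $\mathbb{N}(\hat\bfx^{\smallominus})$ for every $\bfM$, the Misaligned hypothesis gives $\bfLa^{\smalloplus}\mathbb{N}(\hat\bfx^{\smallominus})=\bfZo$, and the containment is upgraded to equality by the rank-preserving dimension count. This buys a self-contained argument that proves exactly what the theorem claims (that the stated $\bfT$ eliminates the misalignment), at the cost of not showing that \eqref{equ:general_solution} exhausts all solutions, which the Mitra citation does provide. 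Your treatment of nonsingularity is a genuine improvement over the paper's presentation: as you note, ``arbitrary $\bfM$'' does not by itself make $\bfT$ invertible (e.g., $\bfM=\bfZo$ yields a rank-$4$ matrix), and your decomposition showing that $\bfT$ acts as the identity-to-$\mathbb{N}(\hat\bfx^{\smallominus})$ map on $\mathrm{range}(\mathbb{N}(\hat\bfx^{\smalloplus}))$ and as $\bfM$ on its orthogonal complement correctly reduces invertibility to the direct-sum condition you state; just sharpen the phrase ``non-orthogonal'' to the precise requirement that no nonzero vector of $\mathrm{range}(\mathbb{N}(\hat\bfx^{\smallominus}))$ be orthogonal to $\mathrm{range}(\mathbb{N}(\hat\bfx^{\smalloplus}))$, which indeed holds here since the two subspaces share the constant block $\bfNxyz$ and differ only slightly in the yaw direction.
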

\begin{proof}
     Based on \eqref{equ:La_N}, one can see that the transformation to the information matrix \eqref{equ:La_align} correspondingly changes the estimator's unobservable subspace:
\begin{equation}
    \mathbf{N}^{\smalloplus} \gets \mathbf{T}^{-1}  \mathbf{N}^{\smalloplus}.
\end{equation}
According to Theorem \ref{the:2}, the Misaligned unobservable subspace is evaluated at the previous estimate, i.e., $\mathbf{N}^{\smalloplus} =\bfN(\hat{\bfx}^{\smallominus})$. Therefore, to align the unobservable subspace from the previous estimate $\hat{\mathbf{x}}^{\ominus}$ to the current estimate $\hat{\mathbf{x}}^{\oplus}$, the transformation matrix is designed as:
\begin{equation}
    \mathbb{N}(\hat{\bfx}^{\smalloplus}) =    \bfT^{-1}  \mathbb{N}(\hat{\bfx}^{\smallominus}) . \label{equ:N_alignment}
\end{equation}
Since \eqref{equ:N_alignment} takes the classical form $\bfA = \bfX^{-1}\mathbf{B}$ \cite{mitra1973common}, its general solution is directly given by \eqref{equ:general_solution}.
\end{proof}

Noting that $\bfM$ in \eqref{equ:general_solution} is an arbitrary matrix to be determined, we present two ways of finding the proper transformation matrices. We classify these two ways as {\it indirect transformation} and {\it direct transformation} based on whether they require making the unobservable subspace fully constant, as shown in Fig. \ref{fig:IndiandDI}.
\begin{figure}[!htbp]
    \centering 
    \includegraphics[width=0.44\textwidth]{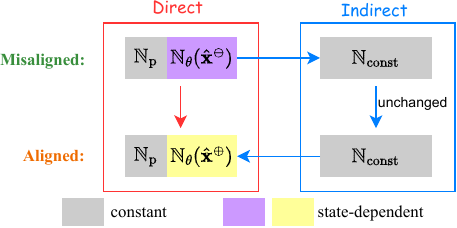}
    \caption{Unobservable subspace alignment through direct and indirect transformations. $\hat{\bfx}^{\smallominus}$ and $\hat{\bfx}^{\smalloplus}$ are the previous and the current estimates, respectively.
         The direct transformation matrix has a closed-form solution.  The indirect transformation additionally requires the design of an auxiliary matrix to make the unobservable subspace fully constant.}   
         \label{fig:IndiandDI}
\end{figure}

\textbf{Direct transformation:}
The transformation matrix will inevitably affect the uncertainty of the observable component.
Thus, the goal is to make the transformation matrix $\bfT$ as close to the identity matrix as possible to minimize its impact on the observable component:
\begin{equation}
    \bfM^*=  \argmin_{\bfM} \left\| \bfT - \bfI \right\|_F ^2,
    \label{equ:criteria}
\end{equation}
where $\left\| \cdot \right\|_F$ is the Frobenius norm.
By solving the optimization problem above, we can obtain a special solution to \eqref{equ:N_alignment}:
\begin{equation}
    \bfT = \bfI +  \left( \mathbb{N}(\hat\bfx^{\smallominus}) - \mathbb{N}(\hat\bfx^{\smalloplus}) \right)\mathbb{N}(\hat\bfx^{\smalloplus})^{\dagger},
    \label{equ:usa2}
\end{equation}
 with  
\begin{equation}
    \bfM^* = \left(\bfI - \mathbb{N}(\hat\bfx^{\smallominus}) \mathbb{N}(\hat\bfx^{\smalloplus})^{{\dagger}}\right)\left(\bfI- \mathbb{N}(\hat\bfx^{\smalloplus})  \mathbb{N}(\hat\bfx^{\smalloplus})^{\dagger}\right).
\end{equation}
Observing that the following equation holds:
\begin{equation}
    \setlength{\arraycolsep}{2pt}
    \mathbb{N}(\hat\bfx^{\smallominus}) - \mathbb{N}(\hat\bfx^{\smalloplus}) = \big[
        \bfZo_{N\times 3} \quad \underbrace{\bfNg(\hat{\bfx}^{\smallominus}) - \bfNg(\hat{\bfx}^{\smalloplus})}_{\boldsymbol{\alpha}}\\
    \big],
\end{equation} then the second term of \eqref{equ:usa2} can be simplified as the matrix multiplication of a column vector and a row vector:
\begin{equation}
    \bfT = \bfI + \boldsymbol{\alpha} \boldsymbol{\beta}^\top,
\end{equation}
where $\boldsymbol{\beta}^\top$ 
 is the fourth row of $\bfN(\hat\bfx^{\smalloplus})^{\dagger}$. When implementing direct transformation, we can fully utilize the structure of the transformation matrix to accelerate the computation, as detailed in Appendix \ref{appendix:trans_cov}.

\begin{remark}
The direct transformation-based USA possesses a high degree of design flexibility. Although \eqref{equ:usa2} provides a general solution for the direct transformation, this solution is based on the constraint \eqref{equ:criteria}. In practice, we can incorporate other constraints according to the specific requirements.

\end{remark}

\textbf{Indirect transformation:}
According to \eqref{equ:N_alignment}, the unobservable subspaces before and after USA are the values of the function $\mathbb{N}(\cdot)$  evaluated at the previous and current estimates, respectively. Thus, an intuitive way to align the unobservable subspace is to design an auxiliary matrix $\mathbb{T}(\cdot)$ that transforms the state-dependent unobservable subspace into a constant unobservable subspace:
\begin{equation}
    \mathbb{T}({\hat\bfx}) \mathbb{N}({\hat\bfx}) = \bfN_{\text{const}},
    \label{equ:const}
\end{equation}
where $\mathbb{T}({\hat\bfx}) \in \mathbb{R}^{N\times N}$, and $\bfN_{\text{const}}$ is a constant matrix with full column rank. Then, the transformation matrix $\bfT$ can be obtained as follows:
\begin{equation}
    \bfT= \mathbb{T}(\hat{\bfx}^{\smallominus})^{-1} \mathbb{T}(\hat{\bfx}^{\smalloplus}) .
    \label{equ:usa1}
\end{equation}
The indirect transformation is also a special solution to \eqref{equ:N_alignment} with $\bfM = \mathbb{T}(\hat{\bfx}^{\smallominus})^{-1} \mathbb{T}(\hat{\bfx}^{\smalloplus})$.
It should be noted that the auxiliary matrix $\mathbb{T}$ not only exists, but is also not unique \cite{tian2025teskf}. Consequently, the transformation matrix $\bfT$ is also not unique.
Here, we provide the design of the auxiliary matrix for the standard estimator:
\begin{equation}
    \mathbb{T}({\hat\bfx}) = 
    \left[ 
        \begin{array}{cccccc}
            {^G}{\hat \bfR}_{I_\tau}&\bfZo&\bfZo&\bfZo&\bfZo&\bfZo\\
            {[{^G}{\hat\bfp}_{I_\tau}]}_\times {^G}{\hat\bfR}_{I_\tau}&\bfI_3 &\bfZo&\bfZo&\bfZo&\bfZo\\
            {[{^G}{\hat\bfv}_{I_\tau}]}_\times {^G}{\hat\bfR}_{I_\tau}&\bfZo &\bfI_3 &\bfZo&\bfZo&\bfZo\\
            \bfZo&\bfZo&\bfZo& \bfI_6&\bfZo&\bfZo\\
            \bfZo&\bfZo&\bfZo&\bfZo& \mathbb{W}(\hat\bfx) &\bfZo\\
            \mathbb{F}({\hat\bfx}) {^G}{\hat\bfR}_{I_\tau}  &\bfZo&\bfZo&\bfZo&\bfZo& \bfI_{3m}
        \end{array}
    \right],
    \label{equ:Transform}
\end{equation}
with 
\begin{equation}
    \mathbb{W}({\hat\bfx}) =
    \setlength{\arraycolsep}{1pt}
    \begin{bmatrix}
        \ddots \\
        & \begin{matrix}
            {^G}{\hat\bfR}_{I_i} &\bfZo_{3\times 3} \\
            [{^{G}}{\hat\bfp}_{I_i}]_\times {^G}{\hat\bfR}_{I_i}&\bfI_{3}
        \end{matrix}  \\
        && \ddots
    \end{bmatrix}\in \mathbb{R}^{6n \times 6n},
\end{equation}
\begin{equation}
    \mathbb{F}({\hat\bfx}) = 
    \setlength{\arraycolsep}{0.02cm}
    \begin{array}{l}
        \big[  \cdots  \hspace{0.1cm} [{^G}{\hat\bfp}_{f_{j}}]_\times^\top  \hspace{0.1cm} \cdots \big]^\top 
    \end{array} \in \mathbb{R}^{3m \times 3}.
\end{equation}
Substituting \eqref{equ:Transform} into \eqref{equ:usa1}, we get the transformation matrix. 
Unlike \cite{chen2024visualinertial,tian2025teskf} using \emph{globally-defined orientation error}, the auxiliary matrix \eqref{equ:Transform} is designed based on the \emph{locally-defined orientation error} \cite[Chapter 5]{eskf}, which is more commonly utilized in existing VINS or SLAM literature. Additionally,
the sub-auxiliary matrix $\mathbb{W}({\hat\bfx})$, corresponding to the variables within the sliding window $\bfx_{\mathcal{W}}$, has not been considered in existing literature.

\begin{remark}
Applying the auxiliary matrix $\mathbb{T}({\hat\bfx})$ to the linearized error-state system \eqref{equ:less-propagation} and \eqref{equ:less-measurement} yields the linearized error-state system for the transformed error-state $\tilde{\mathbf{x}}^* \triangleq \mathbb{T}(\hat{\mathbf{x}})\tilde{\mathbf{x}}$ \cite{tian2025teskf}. The unobservable subspace within the transformed error-state is precisely $\mathbf{N}_{\text{const}}$, which also constitutes the unobservable subspace of the RI-EKF.  Therefore, from this perspective, the indirect transformation establishes a connection between the standard estimator and the RI-EKF.
     \label{remark:ri_tekf}
\end{remark}

\begin{remark}
\label{remark:affine}
        There is a procedural similarity between the transformation-based USA and the methods presented in \cite{hao2025transformationbased,song2024affine}, in that both approaches transform the covariance matrix to ensure consistency.  Nevertheless, three primary distinctions exist between them.
        \begin{itemize}
            \item \emph{Theoretical foundation:} In \cite{hao2025transformationbased,song2024affine}, covariance transformation is motivated by the computational equivalence between transforming the covariance and performing estimation based on transformed error‑states. Our method, by contrast, interprets the covariance transformation as a change in the evaluation point of the unobservable subspace, thereby mitigating inconsistency by aligning the evaluation point.
            \item \emph{Transformation scope:} The approaches in \cite{hao2025transformationbased,song2024affine} primarily apply the covariance transformation after the correction step. As will be shown later, we find that additional estimation steps within VINS estimators, such as delayed feature initialization, also benefit from incorporating the transformation to maintain consistency.
            \item \emph{Transformation design:} The approaches in \cite{hao2025transformationbased,song2024affine} introduce an auxiliary matrix to render the unobservable subspace constant. In contrast, our direct transformation avoids the need for such a matrix or a constant subspace, offering a more flexible design. 
        \end{itemize}
\end{remark}

\subsection{Re-evaluation-based USA}
The analyses in the previous section indicate that the evaluation point of the estimator's unobservable subspace depends on the linearization points of the Jacobians involved in the information matrix calculation. Therefore, the evaluation point can be aligned by re-evaluating these Jacobians at the current estimate. Like the transformation-based USA, re-evaluation solely works on the uncertainty without directly changing the state vector. Specifically, upon completing the step, we re-evaluate the information matrix as follows:
\begin{equation}
       \bfLa^{\smalloplus} = (\bfLa^{\smallominus})^* + {\bfH^*}^\top \bfR^{-1} \bfH^*,
\end{equation}
where the Jacobians in $(\bfLa^{\smallominus})^*$ and $\bfH^*$ are linearized at the current estimate.

Re-evaluation can simultaneously eliminate misalignment and reduce linearization error within the information matrix. However, we recommend re-evaluation-based USA as a complementary tool to transformation-based USA, rather than a standalone solution. This is because re-evaluating the prior information $(\bfLa^{\smallominus})^*$ is often computationally prohibitive, as it may require re-accessing Jacobians accumulated from the beginning, making the process unsustainable for real-time applications. Nevertheless, in certain cases, only the Jacobians in ${\bfH^*}^\top \bfR^{-1} \bfH^*$ require re-evaluation. For instance, during delayed feature initialization (Section \ref{sec:feature_initialization}), the prior $\bfLa^{\smallominus}$ contains no information about newly initialized features, making its re-evaluation unnecessary. Therefore, re-evaluation is the preferred approach for optimal performance when the computational burden is manageable. Otherwise, a transformation-based USA should be employed to balance accuracy and efficiency.

\remark{
    Re-evaluation and FEJ
    represent two fundamentally distinct approaches, despite both modifying the linearization points of the Jacobians.  From the perspective of the unobservable subspace evaluation point, 
    the unobservable subspace in re-evaluation is Aligned, while the unobservable subspace in FEJ is Misaligned and always evaluated at the initial estimate. 
    In terms of execution, re-evaluation modifies only the information matrix $\bfLa^{\smalloplus}$ after the estimation step is completed, whereas FEJ alters the Jacobians during the estimation step, thereby directly affecting both the information matrix and the state estimate 
$\hat\bfx^{\smalloplus}$
.
    }

\subsection{Implementation}

USA can be conveniently integrated into existing VINS estimators as a corrective module. It operates solely on the information (or covariance) matrix at specific estimation steps. Since this matrix serves as the output of an upstream step and the input to a downstream step, USA's intervention requires no changes to the computational procedures of any estimation steps. This design enables USA to perform as a non-intrusive refinement within the overall estimation pipeline.

USA does not increase the overall computational complexity of the estimator. 
Most VINS estimators represent uncertainty using covariance matrices, and the transformation-based USA can be efficiently implemented through them.
By leveraging the inherent sparsity of the matrices, both the direct and indirect transformations have a computational complexity of $O(N^2)$, as detailed in Appendix \ref{appendix:trans_cov}.  For comparison, the computational complexity of the correction itself is $O(N^3)$.
For the re-evaluation-based USA, its computational cost is dependent on the specific application scenario. In its application to delayed feature initialization, its computational load is even lower than that of the transformation (see Fig. \ref{fig:time_accuracy}).

\section{Application on Sliding Window Filter}

\label{sec:VINS_Estimator}
This section utilizes the proposed USE and USA to identify and eliminate the observability misalignments within the SWF framework \cite{genevaOpenVINSResearchPlatform2020,li2014visual}. 
SWF comprises several estimation steps beyond the prediction and SLAM correction steps discussed previously, including MSCKF correction, delayed feature initialization, state augmentation, and marginalization. As state augmentation and marginalization do not introduce observability misalignment, their detailed analysis is deferred to Appendix \ref{app:imu_aug}.
In addition to demonstrating the effectiveness of the proposed approaches, the application on SWF also achieves the following three objectives: (i) clarifying that the consistency of MSCKF differs from that of EKF-SLAM, thereby correcting a common misconception in the literature;  (ii) presenting two tailored strategies for scenarios where multiple estimation substeps share a common linearization point; and (iii) illustrating the use of re-evaluation-based USA in delayed feature initialization.

\subsection{MSCKF Correction}
\label{sec:msckf}

The features in SWF are categorized into two types: SLAM features and MSCKF features. 
If a feature is continuously observable over a certain number of frames, it is considered a SLAM feature and added to the system state through the delayed feature initialization.
Otherwise,  it is treated as an MSCKF feature, and its measurements are utilized to formulate a multi-state constraint. 
As discussed in Section \ref{sec:status_transition}, the unobservable subspace after the SLAM correction is Misaligned. In the following, we will show that the MSCKF correction also leads to misalignment.
For simplicity, we use \((\hat\bfx^{\smallominus},\bfLa^{\smallominus})\) and \((\hat\bfx^{\smalloplus},\bfLa^{\smalloplus})\) to denote the estimates before and after the MSCKF correction and consider only one MSCKF feature, though the approach easily extends to multiple features.
Let $\bfz \in \mathbb{R}^{2n}$ be the stack of the measurements of the MSCKF feature observed by $n$ cloned IMU poses.
The linearized measurement model is:
\begin{equation}
    \tilde\bfz = \bfH_x \tilde\bfx^{\smallominus} + \bfH_{f} {^G}\tilde{\bfp}_{f}^{\smallominus} + \boldsymbol{\epsilon}.
    \label{equ:sw_measurement} 
\end{equation}
where $(\bfH_x,\bfH_{f})$ is evaluated at the current best estimate $(\hat{\bfx}^{\smallominus},{^G}\hat{\bfp}_{f}^{\smallominus})$, and ${^G}\hat{\bfp}_{f}^{\smallominus}$ is the estimate of the feature position obtained through triangulation.
Since the MSCKF feature ${^G}{\bfp}_{f}$ is not included in the state vector, the above linearized measurement model cannot be directly used to correct the system state.
To address this issue, \cite{mourikisMultiStateConstraintKalman2007} applies an orthogonal matrix $\bfQ = \begin{bmatrix}
        \bfQ_1^\top &     \bfQ_2^\top
    \end{bmatrix}^\top$ with $\bfQ_2 \bfH_f = \bfZo$ on the measurement model to separate it into the following two subsystems:
\begin{align}
    \begin{bmatrix}
        \bfQ_1   \tilde{\bfz} \\
        \bfQ_2   \tilde{\bfz}  
    \end{bmatrix} &= 
    \begin{bmatrix}
        \bfQ_1 \bfH_x\\
        \bfQ_2 \bfH_x \\
    \end{bmatrix} \tilde\bfx^{\smallominus}  + \begin{bmatrix}
        \bfQ_1 \bfH_{f} \\
        \bfZo_{{(2n-3)}\times 3}\\
    \end{bmatrix}  {^G}\tilde{\bfp}_{f}^{\smallominus} + \begin{bmatrix}
        \bfQ_1  \boldsymbol{\epsilon}\\
        \bfQ_2 \boldsymbol{\epsilon}
    \end{bmatrix}
    \label{equ:measurement_subsystem0}, \\
        \Rightarrow &\left\{
        \begin{array}{l}
            \tilde{\bfz}_1 =  \bfH_{x1}\tilde\bfx^{\smallominus} + 
            \bfH_{f1} {^G}\tilde{\bfp}_{f}^{\smallominus}+
            \boldsymbol{\epsilon}_{1} \\
            \tilde{\bfz}_2 =  \bfH_{x2}\tilde\bfx^{\smallominus}  +  \boldsymbol{\epsilon}_{2} \\
        \end{array}\right. . \label{equ:measurement_subsystem}
\end{align}
The second measurement subsystem is independent of the MSCKF feature and used to perform the MSCKF correction. During MSCKF correction, the information matrix is updated as 
\begin{equation}
    \bfLa^{\smalloplus} = \bfLa^{\smallominus} + {\bfH_{x2}}^\top \bfR_{2}^{-1} \bfH_{x2}.
    \label{equ:MSCKF_correction}
\end{equation}
Noting that 
\begin{equation}
    \begin{bmatrix}
    \bfH_x &
    \bfH_{f}
    \end{bmatrix} \begin{bmatrix}
        \bfN(\hat\bfx^{\smallominus})\\
        \bfN_f({^G}\hat{\bfp}_{f}^{\smallominus})\\
    \end{bmatrix} = \bfZo,
    \label{equ:N_f}
\end{equation}
where $\bfN_f(\bfp):= \begin{bmatrix}
    \bfI_3 & [\bfp]_\times\bfg
\end{bmatrix}, \bfp \in \mathbb{R}^3$, applying $\bfQ_2$ on the left of the above equation yields:
\begin{equation}
    \bfH_{x2} \bfN(\hat\bfx^{\smallominus}) = \bfZo.
    \label{equ:msckf_N}
\end{equation}
Same as \eqref{equ:La_update}-\eqref{equ:La_N_Acc} in the SLAM correction, by combining \eqref{equ:MSCKF_correction} and \eqref{equ:msckf_N}, we can see that the unobservable subspace is Misaligned after the MSCKF correction:
\begin{equation}
    \mathbf{N}^{\smalloplus} =  \bfN(\hat\bfx^{\smallominus}).
    \label{equ:msckf_misalignment}
\end{equation}
Since MSCKF correction causes misalignment, as well as the SLAM correction, the transformation-based USA is required to maintain the consistency of SWF.

\begin{remark}
    Previous works have generally assumed that MSCKF inherits the consistency property of EKF-SLAM \cite{li2013HighprecisionConsistentEKFbased,huai2022robocentric}; however, our analysis suggests that this assumption may not hold in all cases.
    Comparing \eqref{equ:N_f}-\eqref{equ:msckf_misalignment} with \eqref{equ:unobservable_subspace}, one can see that if a feature is only used to perform the MSCKF correction, the state of the MSCKF feature will not be included in the unobservable subspace. A consequence of this is that if an estimator only performs the MSCKF correction and does not maintain any SLAM features, its unobservable subspace will be composed of the top two blocks of \eqref{equ:unobservable_subspace}.
    Therefore, we only need to consider the top two blocks of \eqref{equ:unobservable_subspace} rather than the entire unobservable subspace when assessing the consistency of the MSCKF estimator.
    According to Theorem \ref{the:2}, maintaining the top two blocks being constant is sufficient to ensure the consistency of the MSCKF estimator. The validation for this finding will be given in Section \ref{sec:sim_USE}.
    \label{rmk:msckf}
\end{remark}

\subsection{Delayed Feature Initialization}
\label{sec:feature_initialization}
If a feature ${^G}{\bfp}_f$ has been observed for a certain number of frames, it will be added to the system state and initialized as a SLAM feature. 
Following MSCKF correction, delayed feature initialization also relies on the orthogonal operation to separate the measurement model into two subsystems, i.e.,  \eqref{equ:measurement_subsystem}. Differently, there are two substeps in the initialization, as both subsystems are also utilized sequentially. 

The delayed feature initialization represents a class of estimation steps, where the steps are divided into multiple substeps that share the same linearization points. Here, we present two strategies to eliminate the misalignment caused by these substeps: performing USA in batch or separately, as shown in Fig. \ref{fig:comparePostCorrect}. The former is a general strategy for multiple substep cases, while the latter is specific to delayed feature initialization but is more accurate.
\begin{figure}[thbp]
    \centering
    \includegraphics[width=0.48\textwidth]{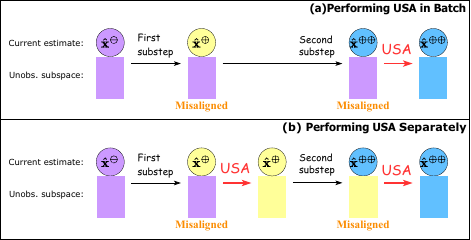} 
    \caption{Different strategies for delayed feature initialization.
    (a) Applying USA in batch once after the composite step of the initialization.
     (b) Applying USA separately after both the first and the second substeps.
    }
    \label{fig:comparePostCorrect}
\end{figure}

\textbf{Performing USA in batch:}
The first substep adds the feature to the state vector based on the first subsystem of \eqref{equ:measurement_subsystem}. Here, we use  \((\hat\bfx^{\smallominus},\bfLa^{\smallominus})\) and  \((\hat\bfx^{\smalloplus},\bfLa^{\smalloplus})\) to denote the estimate before and after the first substep, respectively:
\begin{align}
    \hat{\bfx}^{\smalloplus} &= \begin{bmatrix}
        \hat{\bfx}^{\smallominus}\\ 
        {^G}\hat{\bfp}_{f}^{\smallominus}
    \end{bmatrix}\oplus {\bfLa^{\smalloplus}}^{-1} {\bfH_{\text{1}}}^\top \bfR_1^{-1} \tilde{\bfz}_1,
    \label{equ:fi_x_update}\\
        \bfLa^{\smalloplus} &= \begin{bmatrix}
        \bfLa^{\smallominus} & \bfZo \\
        \bfZo & \bfZo 
    \end{bmatrix} + {\bfH_{\text{1}}}^\top\bfR_1^{-1} {\bfH_{\text{1}}},
    \label{equ:fi_La_update}
\end{align}
where $\bfH_{\text{1}} =  \begin{bmatrix}
    \bfH_{x1} &
    \bfH_{f1}
\end{bmatrix}$ is evaluated at $(\hat{\bfx}^{\smallominus},{^G}\hat{\bfp}_{f}^{\smallominus})$. 
The second substep enhances the state estimate based on the second subsystem.
Its resulting estimate is denoted as  \((\hat\bfx^{\smalloplus\smalloplus},\bfLa^{\smalloplus\smalloplus})\):
\begin{align}
    \hat{\bfx}^{\smalloplus\smalloplus} &= \hat{\bfx}^{\smalloplus}\oplus {\bfLa^{\smalloplus\smalloplus}}^{-1} {\bfH_{\text{2}}}^\top \bfR_2^{-1} \tilde{\bfz}_2,\\
    \bfLa^{\smalloplus\smalloplus} &= \bfLa^{\smalloplus} + {\bfH_{2}}^\top \bfR_{2}^{-1} \bfH_{2},
    \label{equ:La_updata_pp}
\end{align}
where $\bfH_{2} = \begin{bmatrix}
        \bfH_{x2} & \bfZo_{(2n-3)\times 3 }\end{bmatrix}$ is also evaluated at $(\hat{\bfx}^{\smallominus},{^G}\hat{\bfp}_{f}^{\smallominus})$.
Since the first and second substeps share the same linearization point, the correction of the information matrix during these two substeps, i.e., \eqref{equ:fi_La_update} and \eqref{equ:La_updata_pp}, can be combined as follows:
\begin{equation}
    \bfLa^{\smalloplus\smalloplus} = \begin{bmatrix}
        \bfLa^{\smallominus} & \bfZo \\
        \bfZo & \bfZo 
    \end{bmatrix} + {{\bfH}}^\top\bfR^{-1} {\bfH}
    \label{equ:La_section_step}
\end{equation}
where $
    \bfR = \text{Diag}(\bfR_1,\bfR_2) $ and
    $ \bfH = \begin{bmatrix}
        {\bfH_1}^\top 
        {\bfH_2}^\top
    \end{bmatrix}^\top$ are evaluated at $(\hat{\bfx}^{\smallominus},{^G}\hat{\bfp}_{f}^{\smallominus})$.

    Equations \eqref{equ:fi_La_update} and \eqref{equ:La_section_step} share the same form as \eqref{equ:La_update}, which implies that the unobservable subspace remains unchanged across the substeps. More precisely, it is consistently evaluated at the initial estimate:
\begin{equation}
    \mathbf{N}^{\smalloplus\smalloplus}= \mathbf{N}^{\smalloplus}=  \begin{bmatrix}
        \bfN(\hat\bfx^{\smallominus})\\
        \bfN_f({^G}\hat{\bfp}_{f}^{\smallominus})\\
    \end{bmatrix}.
    \label{equ:feature_init}
\end{equation}
This observation suggests that USA should be performed in batch for multiple substep cases, as shown in Fig. \ref{fig:comparePostCorrect}(a). Specifically, the unobservable subspace is directly aligned from the initial estimate $(\hat\bfx^{\smallominus},{^G}\hat{\bfp}_{f}^{\smallominus})$ to the final estimate $\hat\bfx^{\smalloplus\smalloplus}$.

\textbf{Performing USA separately:}
For delayed feature initialization,
performing USA separately after both substeps also proves effective, as illustrated in Fig. \ref{fig:comparePostCorrect}(b). 

Since $ \bfLa^{\smallominus}$ does not contain any information about the feature, only the feature estimate is corrected after the first substep, that is, \eqref{equ:fi_x_update} is equivalent to:
\begin{equation}
    \hat{\bfx}^{\smalloplus} = \begin{bmatrix}
        \hat{\bfx}^{\smallominus} \\
        {^G}\hat{\bfp}_{f}^{\smallominus} +\bfH_{f1}^{-1}\tilde{\bfz}_1 \\   
    \end{bmatrix},
    \label{equ:update0}
\end{equation}
which indicates that only the sub-unobservable subspace corresponding to the newly initialized feature, $\bfN_f({^G}\hat{\bfp}_{f}^{\smallominus})$, is Misaligned. Thus, after applying USA for the first substep, the unobservable subspace is given as follows:
\begin{equation}
    \mathbf{N}^{\smalloplus}= \bfN(\hat\bfx^{\smalloplus})= \begin{bmatrix}
        \bfN(\hat\bfx^{\smallominus})\\
        \bfN_f({^G}\hat{\bfp}_{f}^{\smalloplus})\\
        \end{bmatrix}.
        \label{equ:64}
\end{equation}

As for the Jacobian of the second substep, we have 
\begin{align}
     &\begin{bmatrix}
        \bfH_{x2} & \bfZo_{(2n-3)\times 3 }\end{bmatrix} \begin{bmatrix}
        \bfN(\hat\bfx^{\smallominus})\\
        \bfN_f({^G}\hat{\bfp}_{f}^{\smalloplus})\\
        \end{bmatrix} \overset{\eqref{equ:msckf_N}} {=} \bfZo, \\
        \Rightarrow &\bfH_{2} \bfN(\hat\bfx^{\smalloplus}) = \bfZo,
        \label{equ:66}
\end{align}
which implies that after the second substep, the unobservable subspace remains $\bfN(\hat\bfx^{\smalloplus})$. This allows us to apply a transformation-based USA to align it from $\hat\bfx^{\smalloplus}$ to $\hat\bfx^{\smalloplus\smalloplus}$. 
Note that \eqref{equ:66} provides the key condition enabling USA to be performed separately in delayed feature initialization. In other cases involving multiple substeps, this condition may not hold.

Importantly, the misalignment caused by the first substep can be efficiently eliminated using the re-evaluation-based USA. Specifically, after correcting the estimate using \eqref{equ:update0}, we replace \eqref{equ:fi_La_update} with the following equation:
\begin{equation}
    \bfLa^{\smalloplus} = \begin{bmatrix}
        \bfLa^{\smallominus} & \bfZo \\
        \bfZo & \bfZo 
    \end{bmatrix} + {\bfH_{\text{1}}^{*}}^\top\bfR_1^{-1} {\bfH_{\text{1}}^{*}},
    \label{equ:fi_La_update2}
\end{equation}
where $\bfH_{\text{1}}^{*} =  \begin{bmatrix}
    \bfH_{x1}^{*} &
    \bfH_{f1}^{*}
\end{bmatrix}$ is evaluated at the corrected estimate $(\hat{\bfx}^{\smallominus},{^G}\hat{\bfp}_{f}^{\smalloplus})$. Note that we do not need to re-evaluate the prior information matrix $\bfLa^{\smallominus}$  because it does not contain any information about the newly initialized feature.  Since the re-evaluation-based USA after the first substep can significantly reduce the linearization error of uncertainty, performing USA separately yields better performance.

\remark{
The application on SWF powerfully demonstrates the high precision of the proposed approaches.
USE initially enables precise localization of the observability misalignment down to the level of sub-unobservable subspaces. Subsequently, USA acts as a “scalpel”, accurately rectifying the identified misalignment while leaving the remaining component unaffected.
\label{remark:separately}
}

\section{Simulation and Experiment Results}
In this section, we evaluate the proposed approaches from four perspectives. First, we validate that the USE framework can rigorously assess the consistency of SWF. Second, we compare different USA strategies for delayed feature initialization and select the more effective ones for subsequent tests. Third, we benchmark the proposed USA against state‑of‑the‑art methods in terms of accuracy, consistency, and computational efficiency. Finally, we assess the performance of USA in real‑world scenarios.

 To ensure fairness and comparability of the results, all the estimators in these tests are implemented on the OpenVINS platform.
Three OpenVINS configurations are used during the tests: (1) SLAM mode, which performs SLAM correction and delayed feature initialization; (2) MSCKF mode, which performs MSCKF correction only; and (3) Hybrid mode, which executes all of the above steps. Unless stated otherwise, the maximum numbers of MSCKF and SLAM features are set to 40, and the number of cloned IMU poses is set to 11, following OpenVINS' default settings. 
The first three parts of the evaluation are conducted in a simulated environment using the OpenVINS built-in simulator. During the simulations, the robot follows two trajectories recorded in OpenVINS: Udel-gore and Udel-neig. Udel-gore represents the trajectory of a handheld device operating in an indoor environment, whereas Udel-neig corresponds to the trajectory of a car in an outdoor setting. These two trajectories encompass a broad spectrum of movements, including rapid rotations and translations, making them widely accepted benchmarks for evaluating the performance of the VINS estimator.
Table \ref{tab:sim_params} provides a summary of the basic parameters for the sensors.
\begin{table}[htp]
    \caption{Simulator basic parameters}
    \centering
    \setlength\tabcolsep{4.5pt}
    \begin{tabular}[]{cccc}
            \toprule
            { \textbf{Parameter}} & \textbf{Value} & \textbf{Parameter} & \textbf{Value} \\
            \midrule
            Accel. White Noise    & 2.00e-03       & Gyro. White Noise  & 1.70e-04       \\
            Accel. Random Walk    & 3.00e-03       & Gyro. Random Walk  & 2.00e-05       \\
            IMU Freq.          & 200    & Cam Freq.          & 10  \\
            Cam Number         & Mono   & Cam Noise          & 2 pixel       \\
            Cam Resolution          & 720 $\times$ 480       &Cam $f_x$\&$f_y$ &(459, 457)                     \\
            \bottomrule
    \end{tabular}
    \label{tab:sim_params}
\end{table}

\begin{figure}
    \centering 
    \includegraphics[width=0.485\textwidth]{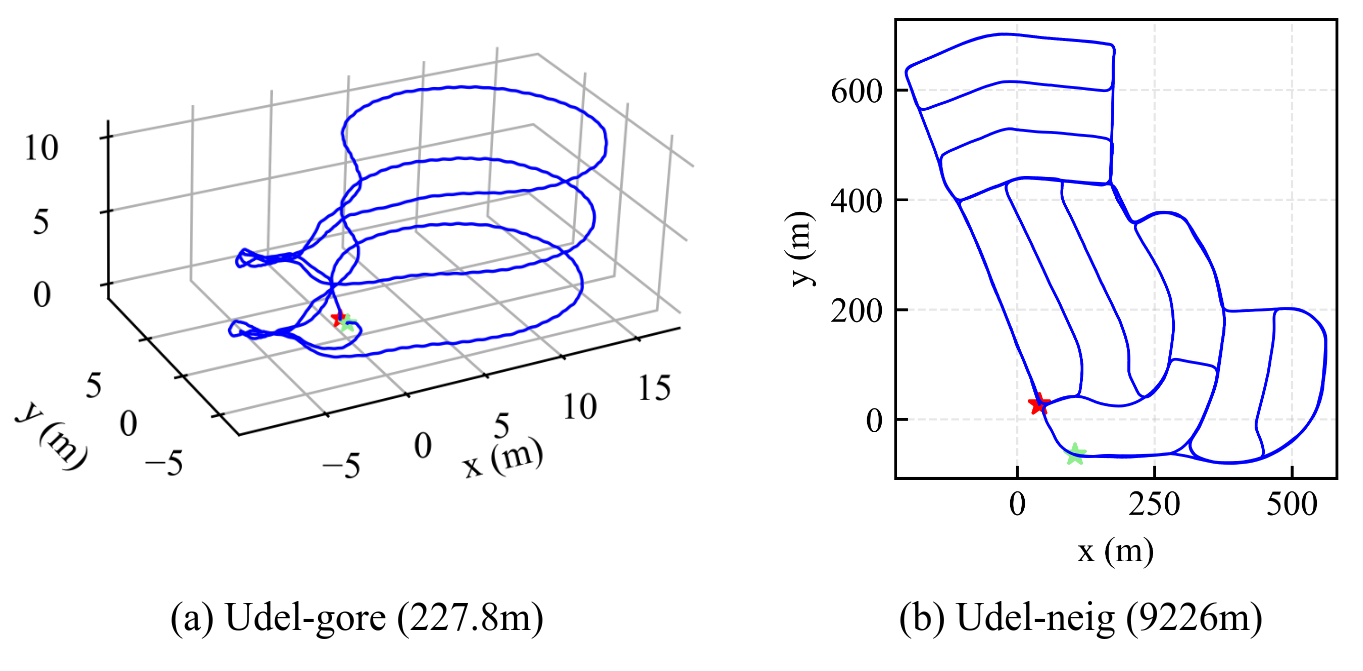}
    \caption{
        Trajectories used in the simulation. The green star represents the starting point, while the red star indicates the endpoint of the trajectory.
        }
        \label{fig:traj_sim}
\end{figure}
The following metrics are utilized for assessment: Root Mean Square Error (RMSE) to evaluate trajectory accuracy and Normalized Estimation Error Squared (NEES) to assess system consistency. A lower RMSE indicates higher estimation accuracy, while an NEES value closer to 1 reflects better consistency.

\subsection{USE Validation}
\label{sec:sim_USE} 
The validation of USE includes two parts. 
First, we demonstrate that USE accurately assesses MSCKF consistency and correct the previous misunderstanding that its consistency is the same as that of EKF-SLAM.
Then, we show that not only corrections, but also delayed feature initialization lead to misalignment.

\textbf{MSCKF consistency:}
The proposed USE can accurately assess the consistency of MSCKF and demonstrates that it is not the same as that of EKF-SLAM. To validate our claim, we consider three different estimators. Each estimator executes in two modes: MSCKF mode and SLAM mode (i.e., EKF-SLAM).  The three estimators are as follows:
\begin{itemize}
    \item Std: The standard estimator as defined in Section \ref{sec:system_model}. 
    \item RI-EKF: Models the extended IMU pose together with the $m$ landmarks on $\mathbf{SE}_{2+m}(3)$ \cite{wuInvariantEKFVINSAlgorithm2017}.
    \item PRI: An estimator with a partially right-invariant representation, in which the extended IMU pose and landmarks are modeled on $\mathbf{SE}_{2}(3)$  and $\mathbb{R}^{3m}$, respectively \cite{tsao2023AnalyticIMUPreintegration}.
\end{itemize}

\begin{table}[t]
    \caption{Consistency properties of different estimators in MSCKF mode and SLAM mode (i.e., EKF-SLAM)}
    \centering
    \setlength\tabcolsep{4.5pt}
    \begin{tabular}[]{cccc}
            \toprule
            { \textbf{Estimator}} & EKF-SLAM & MSCKF & Same Consistency \\
            \midrule
            Std & Inconsistent & Inconsistent & \Checkmark \\
            RI-EKF & Consistent & Consistent & \Checkmark \\
            PRI & Inconsistent & Consistent & \XSolidBrush \\
            \bottomrule
    \end{tabular}
    \label{tab:consistency}
\end{table}

Table~\ref{tab:consistency} presents the consistency properties of these estimators, followed by a detailed analysis of each estimator.
in the SLAM mode, the unobservable subspace of Std is given by \eqref{equ:unobservable_subspace}, while those of RI-EKF and PRI are $\mathbf{N}_{\text{RI-EKF}}$ and $ \mathbf{N}_{\text{PRI}}$, respectively:
\begin{equation}
\scriptstyle
    \bfN_{\text{RI-EKF}} (\bfx) = 
\setlength{\arraycolsep}{4pt}
     \left[\begin{array}{cc}
    \bfZo_{3\times 3} &  -\bfg            \\
    \bfI_3            & \bfZo_{3\times 1}\\
    \bfZo_{3\times 3} & \bfZo_{3\times 1}\\
    \bfZo_{3\times 3} & \bfZo_{3\times 1}\\
    \bfZo_{3\times 3} & \bfZo_{3\times 1}\\
            \hdashline
    \vdots & \vdots \\
    \bfZo_{3\times 3} & -   \bfg \\
    \bfI_3 & \bfZo_{3\times 1}\\
    \vdots & \vdots \\
    \hdashline
    \vdots & \vdots \\
    \bfI_3 & \bfZo_{3\times 1} \\
    \vdots & \vdots \\
\end{array}\right],    \hspace{3pt}     \bfN_{\text{PRI}}(\bfx) = 
\setlength{\arraycolsep}{1pt}
\left[\begin{array}{cc}
    \bfZo_{3\times 3} & - \bfg            \\
    \bfI_3            & \bfZo_{3\times 1}\\
    \bfZo_{3\times 3} & \bfZo_{3\times 1}\\
    \bfZo_{3\times 3} & \bfZo_{3\times 1}\\
    \bfZo_{3\times 3} & \bfZo_{3\times 1}\\
            \hdashline
    \vdots & \vdots \\
    \bfZo_{3\times 3} & -\bfg \\
    \bfI_3 & \bfZo_{3\times 1}\\
    \vdots & \vdots \\
        \hdashline
        \vdots & \vdots \\
        \bfI_3 & [{^G}{\bfp}_{f_{j}}]_\times \bfg\\
        \vdots & \vdots \\
        \end{array}\right].
    \label{equ:unobservable_subspace_ri}
\end{equation}
Since the unobservable subspaces of Std and PRI depend on states, misalignments would happen in the SLAM mode, leading to inconsistency. 
According to Remark \ref{rmk:msckf}, the unobservable subspaces in the MSCKF mode are composed of the top two blocks of those of the SLAM mode. 
Because the top two blocks of $\mathbf{N}_{\text{PRI}}$ are state-independent, misalignments would not happen in the PRI in the MSCKF mode. This demonstrates that the earlier view that MSCKF and EKF-SLAM share the same consistency is not correct.

We also validate our conclusion regarding the consistency of MSCKF through Monte Carlo simulations.
Fig. \ref{fig:msckf_rmse} and Fig. \ref{fig:msckf_nees} show the results of 100 runs.
Except for PRI, the SLAM mode exhibits higher accuracy than the MSCKF mode, as the SLAM mode utilizes more information. The “anomalous” performance of PRI is expected because its MSCKF mode is consistent, whereas the SLAM mode is inconsistent.

\begin{figure}[t]
    \centering
    \includegraphics[width=0.485\textwidth]{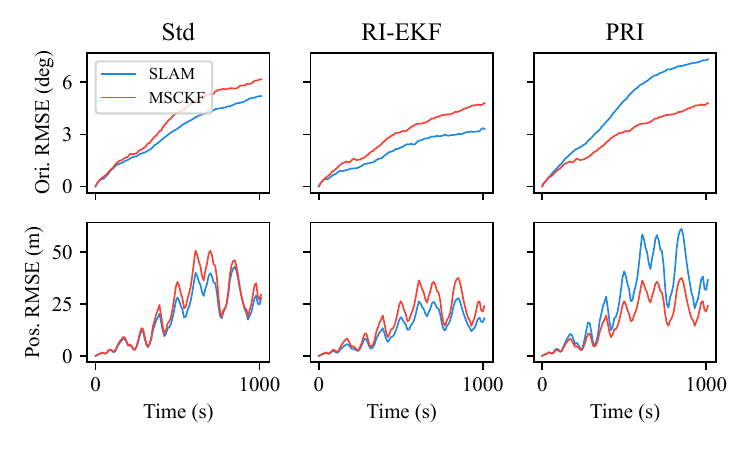}
    \caption{The orientation and position RMSE of different estimators in the MSCKF and SLAM modes.
    The accuracy of the PRI SLAM is worse than that of the PRI MSCKF, because the former is inconsistent, while the latter is consistent.}
    \label{fig:msckf_rmse}
\end{figure}
\begin{figure}[t]
    \centering 
    \includegraphics[width=0.485\textwidth]{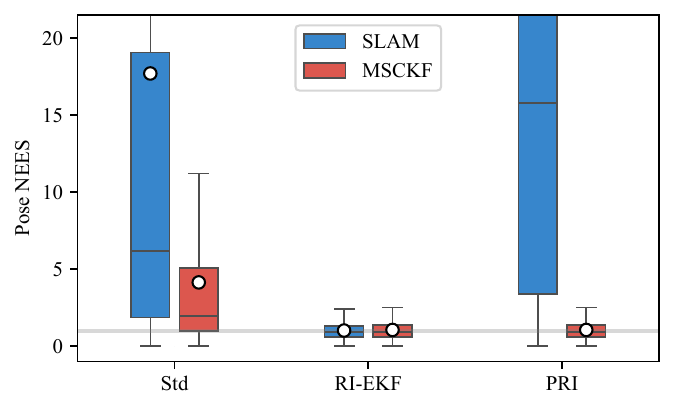}
    \caption{The pose NEES of different estimators in the MSCKF and SLAM modes. The gray line indicates the expected NEES value: 1.  The mean NEES value of RI-EKF SLAM, RI-EKF MSCKF, and PRI MSCKF are approximately 1, indicating that these estimators are consistent. The others are inconsistent.}
    \label{fig:msckf_nees}
\end{figure}

\textbf{Inconsistency source:}
We categorize the misalignments in SWF into two classes: those caused by corrections and those caused by delayed feature initialization.
By eliminating certain classes of misalignment, we can observe the corresponding changes in the estimator's consistency.
Four cases are presented in Table \ref{tab:ab2}, where Case 1 serves as the baseline without USA
and Case 4 eliminates all misalignments.
During this test, all estimators are executed in SLAM mode, and all alignments in the correction and initialization steps are performed using the indirect transformation. Executing the estimators in other modes or using alternative USA methods does not affect the conclusions and is therefore omitted.

\begin{table}[!ht]
    \caption{Average orientation (deg) / position (m) RMSE and NEES of 100 Monte Carlo runs on Udel-gore.}
    \centering
    \begin{threeparttable}          
        \setlength{\tabcolsep}{3.5pt}
    \centering
        \begin{tabular}[]{cccccccc}
            \toprule
            \multirow{2}{*} {Cases}&  \multirow{2}{*} { Corrections} & \multirow{2}{*} {\makecell{Delayed \\ Feat. Init.}} &\multirow{2}{*}  {RMSE}  &\multirow{2}{*}  {NEES} \\
           & \\
            \midrule
           1& -              & -   & 0.823  / 0.199  & 13.55 / 1.702 \\
           2& -              & Aligned   &  0.807  / 0.197  & 12.95 / 1.667  \\
           3& Aligned        & -   & 0.576  / 0.177  & 1.217 / 1.073 \\
           4& Aligned        & Aligned   & 0.450  / 0.167  & 0.916 / 0.981  \\ 
        \bottomrule
    \end{tabular}
\end{threeparttable}       
\label{tab:ab2}
\end{table}

As shown in Table~\ref{tab:ab2}, eliminating arbitrary misalignment improves both consistency and accuracy. Consequently, inconsistencies stem from misalignments induced by both the corrections and the delayed feature initialization. 
However, if the misalignment from the corrections is not addressed, eliminating that from the initialization has only a limited effect, as shown in Fig. \ref{fig:ori_nees_rmse_for_source_inconsistency}. 
This suggests that the misalignment induced by corrections is the main source of inconsistency.
This discrepancy arises because the correction step is executed at least once per frame, whereas feature initialization occurs only once for each SLAM feature.

\begin{figure}
    \centering
    \includegraphics[width=0.99\linewidth]{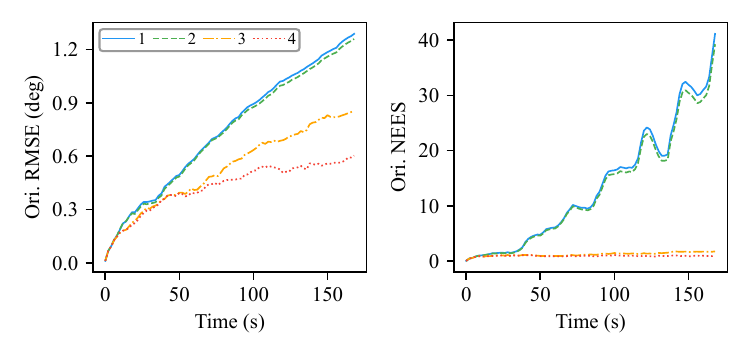}
    \caption{Orientation RMSE and NEES over time of different cases.}
    \label{fig:ori_nees_rmse_for_source_inconsistency}
\end{figure}

\subsection{USA Strategy Comparison}
\label{sec:eliminate_misalignment}
In Section \ref{sec:feature_initialization}, we introduced several strategies to address the misalignment caused by delayed feature initialization, which are summarized in Table \ref{tab:sim_rmse}.
“None” indicates that no USA is performed for the initialization and serves as the baseline (i.e., Case 3 in Table \ref{tab:ab2});
Batch transformation refers to applying transformation-based USA in batch, as depicted in Fig. \ref{fig:comparePostCorrect}(a); Separate transformation denotes performing transformation-based USA separately, as illustrated in Fig. \ref{fig:comparePostCorrect}(b);  Separate re-evaluation is similar to Separate transformation, but it employs re-evaluation-based USA after the first substep.

\begin{table}[!t]
    \centering
    \caption{Average RMSE (deg/meter) and NEES of 100 Monte Carlo runs in different USA strategies for delayed feature initialization}
    \begin{threeparttable}          
            \begin{tabular}[]{clcc}
                    \toprule
                    Dataset& Strategy &RMSE &NEES\\
                    \midrule
                    & None  & 0.576  / 0.177  & 1.217 / 1.073  \\
                    Udel-gore &Batch transformation & 0.450  / \textbf{0.167}  & 0.916 / 0.981   \\
                     (227.8m) &Separate transformation & 0.450  / \textbf{0.167}  & 0.916 / 0.981   \\
                    &Separate re-evaluation &\textbf{0.449}  / \textbf{0.167}  & 0.913 / 0.983   \\
                    \midrule
                    & None   & 4.009  / 26.131  & 8.414 / 7.543   \\

                    Udel-neig& Batch transformation& 2.576  / 16.678  & 1.093 / 1.047 \\
                   (9226m) &Separate  transformation & 2.574  / 16.657  & 1.092 / 1.043   \\
                    &Separate re-evaluation &\textbf{2.326}  / \textbf{15.242}  & \textbf{1.021} / \textbf{0.974}   \\
                    \bottomrule
            \end{tabular}
    \end{threeparttable}       
    \label{tab:sim_rmse}
\end{table}

The results indicate that, except for “None”,  the other methods effectively mitigate the inconsistency issue caused by delayed feature initialization. 
Among them, Batch transformation and Separate transformation yield almost identical results. The difference between them is reasonable, as Separate transformation changes the uncertainty of the first substep, which makes the result of the subsequent substep slightly different from that in Batch transformation.
Notably, Separate re-evaluation achieves the best performance, as it not only aligns the unobservable subspace but also reduces the linearization error of uncertainty. Since performing USA separately yields better performance, we adopt it as the default implementation in subsequent tests.

\subsection{USA Benchmarking}
\label{sec:sim_usa}
We validate the effectiveness of USA by benchmarking it against state‑of‑the‑art methods. Specifically, in addition to Std and RI‑EKF introduced in Section \ref{sec:sim_USE}, we also compare the following estimators:
\begin{itemize}
    \item FEJ: Estimator using the first estimated Jacobians to improve consistency \cite{huangAnalysisImprovementConsistency2008}. 
    \item USA-IT: Estimator with indirect transformation for both corrections and delayed feature initialization.
    \item USA-DT: Similar to USA-IT, but uses direct transformation.
    \item USA-DTR:  Similar to USA-DT, but uses re-evaluation for the first substep of delayed feature initialization.
\end{itemize}
Among these estimators, Std and FEJ are implemented natively in OpenVINS, RI-EKF is implemented by \cite{tian2025teskf}, and USA-IT, USA-DT, and USA-DTR are our own implementations. 
\begin{figure*}[htbp]
    \centering
    \includegraphics[width=1\textwidth]{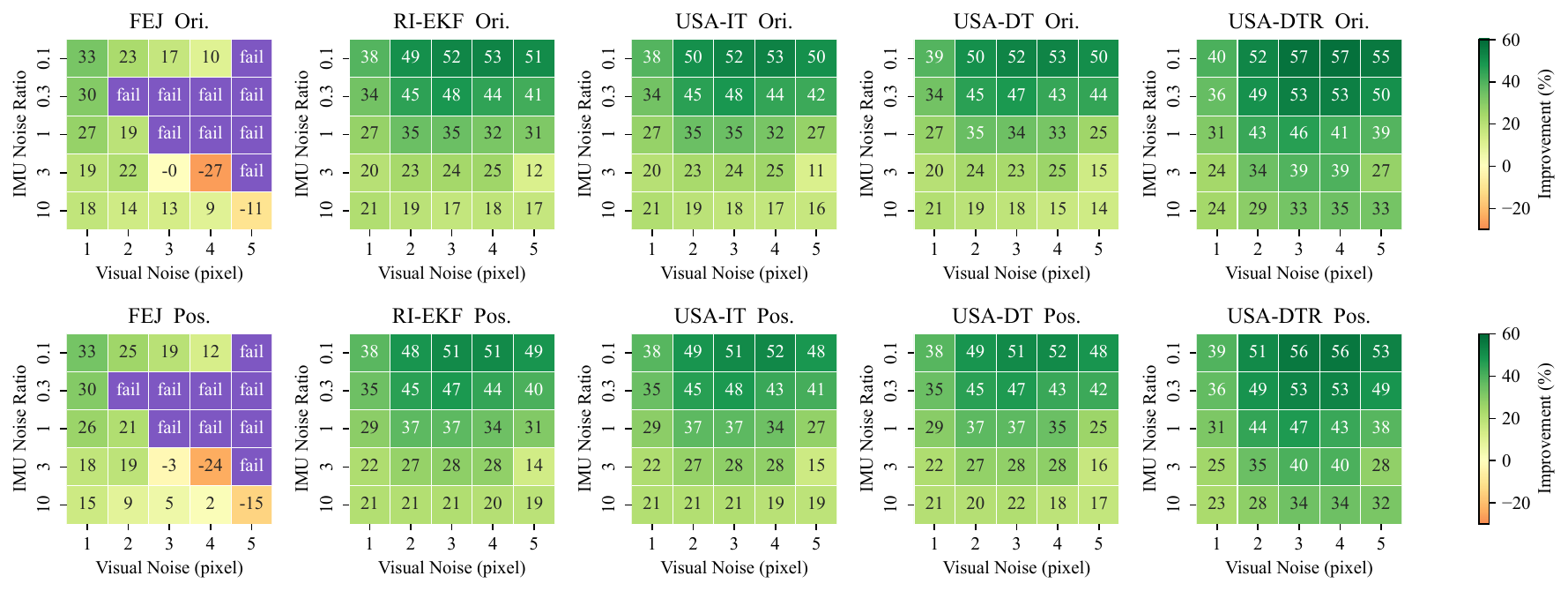}
    \caption{Average orientation and position RMSE improvements of the estimators compared to Std under different noise levels.}
    \label{fig:RMSE_noise}
\end{figure*}

\textbf{Accuracy:}
We test the six estimators described above in the MSCKF, SLAM, and Hybrid modes.
For each mode, each estimator, and each dataset, we run 1000 Monte Carlo simulations and report the average orientation and position RMSE in Table \ref{tab:RMSE_MSH3}.

\begin{table}[!ht] 
    \centering
    \caption{Average RMSE (deg/meter) of 1000 Monte Carlo simulations}
    \label{tab:RMSE_MSH3}
    \begin{threeparttable}          
            \setlength{\tabcolsep}{4.5pt}
            \begin{tabular}[]{ccccc}
                    \toprule
                    Dataset & Estimator & MSCKF & SLAM  & Hybrid \\
                    \midrule
                    \multirow{6}{*}{\rotatebox{90}{Udel-gore}} & Std & 0.886 / 0.284 & 0.950 / 0.220 & 0.907 / 0.213 \\
                     & FEJ & \textbf{0.840} / 0.303 & 0.560 / 0.187 & 0.553 / 0.183 \\
                     & RI-EKF & \textbf{0.840} / \textbf{0.281} & 0.499 / \textbf{0.174} & 0.490 / \textbf{0.172} \\
                     & \textbf{USA-IT} & \textbf{0.840} / \textbf{0.281} & 0.499 / \textbf{0.174} & 0.490 / \textbf{0.172} \\
                    & \textbf{USA-DT} & {\textbf{0.840} / \textbf{0.281}} & 0.499 / \textbf{0.174} & 0.491 / \textbf{0.172} \\
                    & \textbf{USA-DTR} & {\textbf{0.840} / \textbf{0.281}} & \textbf{0.497} / \textbf{0.174} & \textbf{0.489} / \textbf{0.172} \\
                    \midrule
                    \multirow{6}{*} {\rotatebox{90}{Udel-neig}} & Std & 4.093 / 26.337 & 4.047 / 27.109 & 3.714 / 24.409 \\
                    & FEJ & 3.441 / 22.709 & - & 3.334 / 21.796 \\
                    & RI-EKF & 3.211 / 20.391 & 2.412 / 16.111 & 2.403 / 15.559 \\
                    & \textbf{USA-IT} &  {3.208 / 20.378} & 2.412 / 16.109 & 2.395 / 15.517 \\
                    & \textbf{USA-DT} & \textbf{3.204} / \textbf{20.361} & 2.421 / 16.156 & 2.399 / 15.540 \\
                    & \textbf{USA-DTR}&  \textbf{3.204} / \textbf{20.361} & \textbf{2.110} / \textbf{14.604} &  \textbf{2.083}  / \textbf{13.965}  \\
                    \bottomrule
            \end{tabular}
    \end{threeparttable}       
\end{table}
During this test, USA-DTR consistently outperforms all other methods across all scenarios, which is expected, as it aligns the unobservable subspace with the current estimate while also reducing the linearization error within the information matrix. The RI-EKF and USA-IT methods demonstrate nearly identical performance. This similarity is anticipated, since, as discussed in Remark \ref{remark:ri_tekf}, both approaches share an equivalent linearized error-state system. Interestingly, the USA-DT method also exhibits performance comparable to RI-EKF and USA-IT, despite its greater simplicity. Unlike RI-EKF and USA-IT, USA-DT provides a straightforward closed-form solution and does not require the design of invariant error-state representations or auxiliary matrices. 
The RMSE value of USA-DT and USA-DTR is the same in MSCKF mode, because feature initialization is not performed in this mode.  
Regarding FEJ, it generally outperforms Std, except in the  SLAM mode on the Udel-neig dataset, where FEJ occasionally fails due to poor feature initialization in some runs.

To further compare the performance of the estimators under different noise levels, IMU noises are set to [0.1, 0.3, 1, 3, 10] times the default noise level, while the visual noise is set to [1, 2, 3, 4, 5] pixels.  For each noise level, we run 100 Monte Carlo simulations on the Udel-neig dataset with the Hybrid mode. It is obvious that the performance of the estimators degrades as the noise increases. Thus, to better visualize the results, we present the improvement compared to Std within the corresponding noise level:
\begin{equation}
    \text{Improvement} = \frac{\text{RMSE}_{\text{Std}} - \text{RMSE}_{\text{Other}}}{\text{RMSE}_{\text{Std}}}.
    \label{equ:improvement}
\end{equation}
The improvement is shown in Fig. \ref{fig:RMSE_noise}.
FEJ uses the predicted IMU states and the initial estimated feature positions as linearization points to ensure consistency. However, as the noise level increases, the accuracy of the first estimate deteriorates significantly. As a result, the Jacobian of FEJ deviates from the optimal linearization point, leading to a significant decline in accuracy, sometimes even worse than Std.
In contrast, the others evaluate the Jacobians using the current best estimates, thereby maintaining their Jacobian optimality. 
Among these estimators, USA-DTR consistently outperforms the others across all noise levels.

\textbf{Computational efficiency:}
We also demonstrate that USA can ensure consistency while not significantly increasing the computational burden.  To compare the computational efficiency of these estimators, we set the maximum number of SLAM features to [20, 40,  80] in the Udel-gore dataset with the  SLAM mode and the hybrid mode. The results of the MSCKF mode are omitted, as it does not maintain any SLAM features in the state vector, and the computational burdens of different estimators in this mode are almost the same.

\begin{figure}[htbp]
    \includegraphics[width=0.485\textwidth]{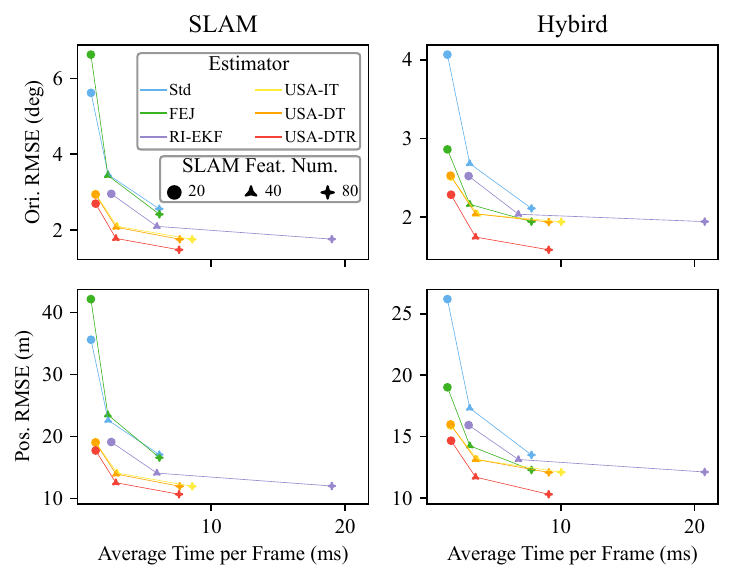}
    \caption{Average RMSE and time cost of processing one frame with different numbers of SLAM features. Points closer to the lower-left corner indicate better performance.}
    \label{fig:time_accuracy}
\end{figure}

Fig. \ref{fig:time_accuracy} reports the average orientation and position RMSE of the estimators together with the average time cost of processing one frame. The results indicate that the introduction of USA does not significantly increase the computational demand. Notably, USA-DTR exhibits the best performance among the three USA methods, demonstrating that the re-evaluation can enhance the accuracy without adding computational cost.
In contrast, RI-EKF incurs high computational costs when the number of landmarks is large due to a computational bottleneck in covariance propagation. 
To address this bottleneck, \cite{tian2025teskf} proposed an acceleration technique; however, this approach would increase the implementation complexity of the estimator.
 
\textbf{Consistency:}
To evaluate the consistency of the estimators, the NEES results within the Hybrid mode are presented in Fig. \ref{fig:udel_neighborhood_nees_compare}. The upper two subfigures display the average NEES of the orientation and position over time, respectively. Among these estimators, USA-DTR achieves the best NEES values, even better than RI-EKF. The lower two subfigures show the NEES frequency distribution. 
Theoretically, NEES follows a chi‑square distribution with 3 degrees of freedom, both for position and orientation. 
The results show that the NEES frequency histogram of USA-DTR  best fits the chi-square distribution, while RI-EKF, USA-IT, and USA-DT are slightly skewed to the right. 
\begin{figure}[htbp]
    \centering
    \includegraphics[width=0.485\textwidth]{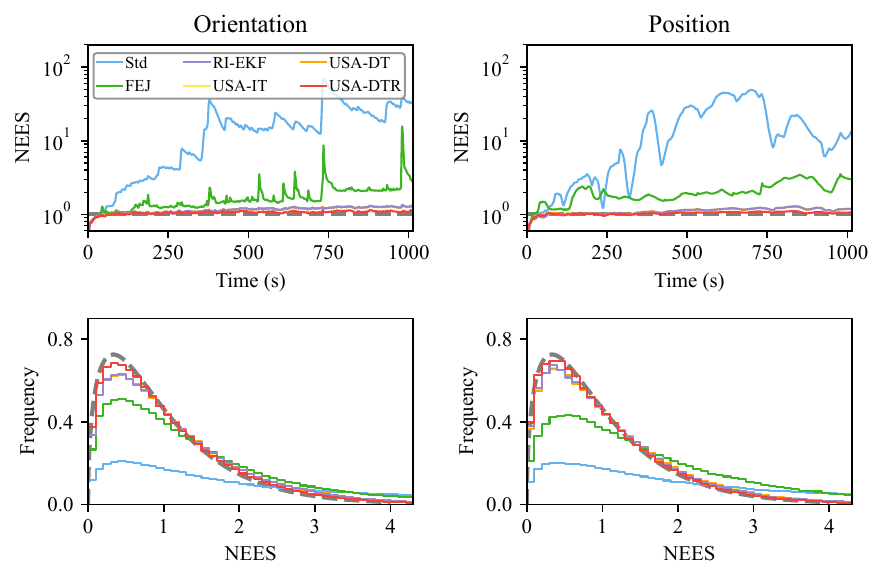}
    \caption{NEES results in Udel-neig in the hybrid mode. The upper two subfigures show the average NEES of the orientation and position over time, respectively. The lower two subfigures show the NEES frequency distribution, with the x-axis representing the NEES value and the y-axis indicating the frequency of occurrence. The lines of RI-EKF, USA-DT, and USA-IT overlap due to their similar performance.}
    \label{fig:udel_neighborhood_nees_compare}
\end{figure}
\begin{figure}[ht]
    \centering
    \includegraphics[width=0.485\textwidth]{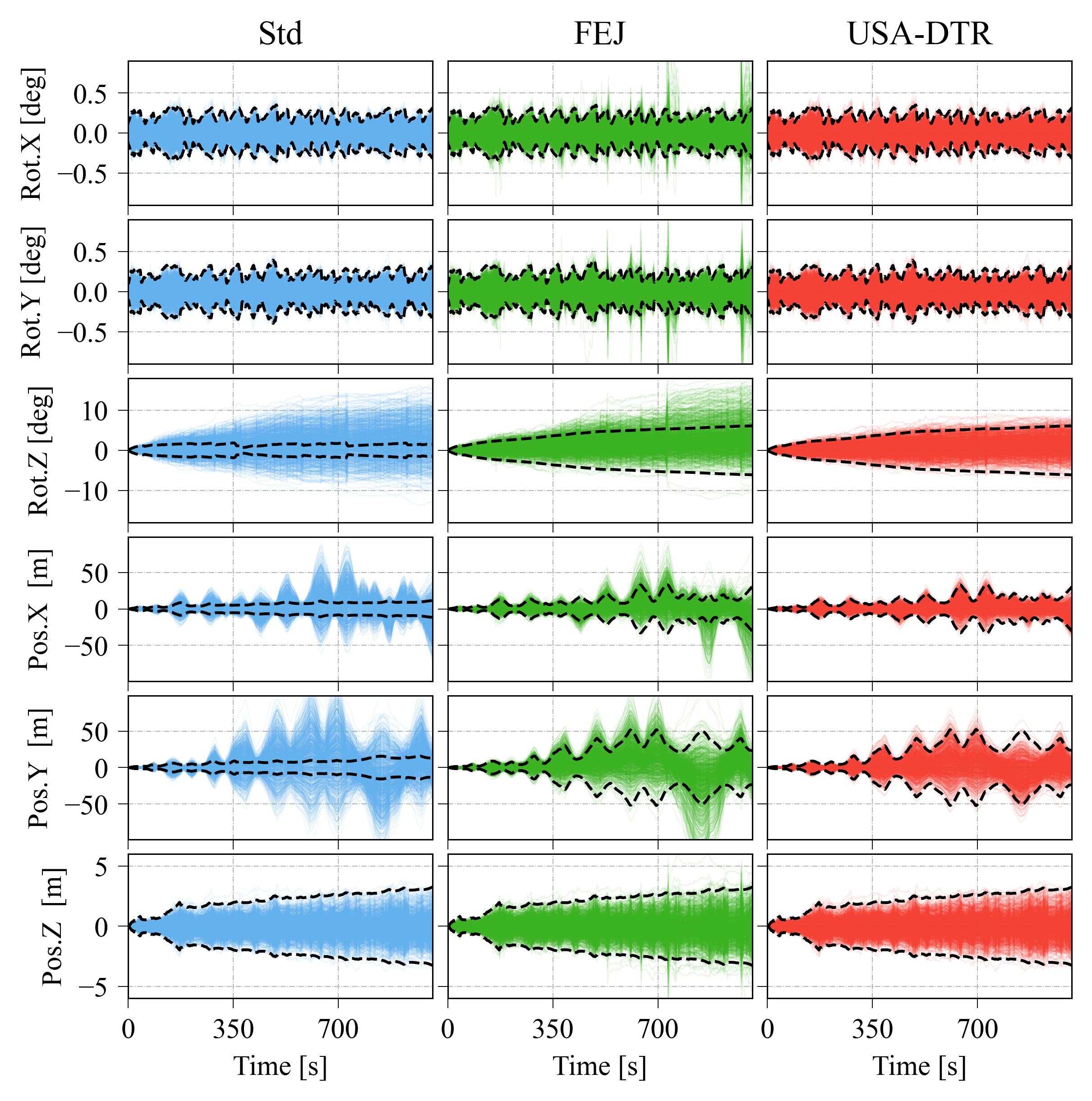}
    \caption{Orientation and position errors of 1000 runs in Udel-neig in the hybrid mode. The black dashed lines represent the 3-$\sigma$ bound. 
    The lines of RI-EKF, USA-IT, and USA-DT are similar to USA-DTR and are therefore omitted to save space.
    }
    \label{fig:udel_neighborhood_compare}
\end{figure}

Additionally, Fig. \ref{fig:udel_neighborhood_compare} shows the orientation and position errors together with the corresponding 3-$\sigma$ bounds. Note that here we report the orientation error using the globally-defined orientation error \cite{eskf} instead of the local error, because the last element in the global error corresponds exactly to the orientation around the gravity direction, i.e., $\bfN_\text{g}$.
Since  $\bfN_\text{g}$ falsely becomes observable in Std, the 3-$\sigma$ bound of the orientation error around the $z$-axis does not increase.
As a result, the errors of Std exceed the 3-$\sigma$ bound, indicating that the estimator is inconsistent. FEJ performs better than Std and has a similar bound to those of other estimators. However, its error exceeds the 3-$\sigma$ bound in some runs, mainly resulting from the bad feature initialization. 
The errors of RI-EKF, USA-IT, USA-DT, and USA-DTR are almost always within the bounds of 3-$\sigma$.

\textbf{Extension to local feature representation:}
The proposed method can also be utilized in the local feature representation (e.g., anchored inverse depth \cite{civera2008inverse}). 
The unobservable subspace with a local feature
representation has a similar form to \eqref{equ:unobservable_subspace}, where the only difference is that the block corresponding to features is constant. Since this block does not depend on states, the delayed feature initialization step will not cause misalignment. Thus, there is no need to align the unobservable subspace after the feature initialization step.
\begin{table}[!t] 
    \centering
    \caption{Average RMSE (deg/meter)  of 1000 runs within the local feature representation}
    \begin{threeparttable}          
            \setlength{\tabcolsep}{5pt}
            \begin{tabular}[]{cccccc}
                    \toprule
                    Dataset & Estimator &  MSCKF &  SLAM  & Hybrid \\
                    \midrule
                    \multirow{4}{*}{\rotatebox{90}{Udel-gore}} & Std & 0.886 / 0.284 & 0.507 / \textbf{0.171} & 0.497 / 0.169 \\
                     & FEJ & 0.840 / 0.303 & 0.497 / 0.173 & 0.485 / 0.170 \\
                     & \textbf{USA-IT} & \textbf{0.840} / \textbf{0.281} & \textbf{0.497} / \textbf{0.171} & \textbf{0.485} / \textbf{0.168} \\
                     &\textbf{USA-DT} & \textbf{0.840} / \textbf{0.281} & \textbf{0.497} / \textbf{0.171} & \textbf{0.485} / \textbf{0.168} \\
                    \midrule
                    \multirow{4}{*} {\rotatebox{90}{Udel-neig}} & Std & 4.093 / 26.337 & 3.169 / 21.150 & 3.026 / 19.967 \\
                                                                & FEJ & 3.441 / 22.708 & 2.760 / 18.414 & 2.579 / 16.982 \\
                                                                & \textbf{USA-IT} & {3.208} / {20.378} & {2.274} / {15.405} & \textbf{2.339} / \textbf{15.276} \\
                                                                & \textbf{USA-DT} & \textbf{3.204} / \textbf{20.361} & \textbf{2.271} / \textbf{15.389} & \textbf{2.339} / {15.278} \\
                    \bottomrule
            \end{tabular}
    \end{threeparttable}       
    \label{tab:RMSE_MSH4}
\end{table}

Table \ref{tab:RMSE_MSH4} shows the average orientation and position RMSE in the local feature representation.
The results demonstrate that the proposed methods, USA-IT and USA-DT, are also effective in the local feature representation. 
Compared to the results within the global representation in  Table \ref{tab:RMSE_MSH3}, the performance gap between Std and the other methods is smaller; 
we speculate that the unobservable subspace in the local feature representation contains a larger constant component, which results in a weaker inconsistency phenomenon compared to the global feature representation.

\subsection {Real-world Dataset Evaluation}
We further evaluate the proposed method through real-world experiments.
\begin{figure}
    \centering
    \includegraphics[width=0.99\linewidth]{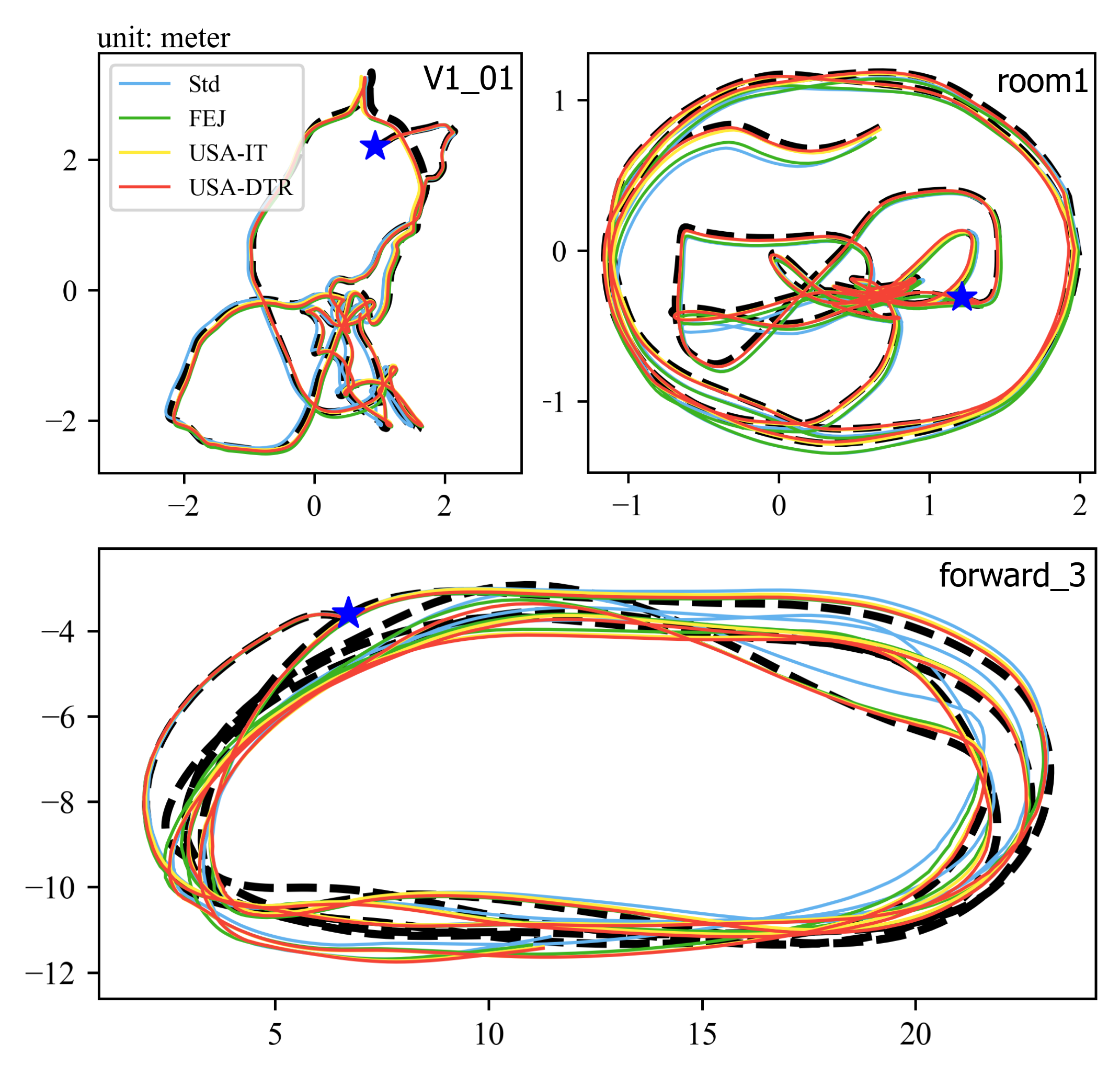}
    \caption{The groundtruth trajectory (black dashed line) and the estimated trajectories (colored solid lines) are aligned using the initial frame (blue star) as the reference point. To avoid overcrowding of the lines, only the trajectories during the first 50 or 100 seconds are shown.}
    \label{fig:single_posyawalign}
\end{figure}

\textbf{Indoor Datasets:}
We compare the proposed method with other methods on three public indoor datasets: EuRoC-MAV \cite{burri2016EuRoCMicroAerial}, TUM-VI \cite{schubert2018tum}, and UZH-FPV \cite{delmerico2019are}. 
During this test,  the feature numbers for SLAM and MSCKF are limited to 50 and 40, respectively. To enhance performance, camera intrinsics and time offset calibrations are enabled. For a fair comparison, all other parameters are kept at their default values in OpenVINS.

To assess the accuracy of the estimators, the estimated and ground truth trajectories are aligned based on the initial frame, as illustrated in Figure \ref{fig:single_posyawalign}. Table \ref{tab:pose_estim_error} reports the RMSE results on the three datasets. (USA-DT and RI-EKF are omitted as they are similar to USA-IT.) Due to the real-world complexities, such as incorrect data associations and noise models, the dataset results may not be as ideal as those from simulations. Nonetheless, compared to Std, the proposed USA still shows a substantial improvement in accuracy.

\begin{table}[htbp]

\caption{RMSE (deg/meter) with stereo configuration on Datasets EuRoC-MAV, TUM-VI, and UZH-FPV.}
\label{tab:pose_estim_error}    
\centering
\setlength{\tabcolsep}{3pt}
\begin{tabular}{lcccc}
\toprule
Dataset & Std & FEJ & {USA-IT} & {USA-DTR} \\
\midrule
V1\_01          & 1.295 / 0.101    & 1.220 / 0.076    & \textbf{0.598} / \textbf{0.074}    & 0.809 / 0.096 \\
V1\_02          & 2.951 / 0.126    & \textbf{1.317} / 0.127    & 1.686 / \textbf{0.102}    & 1.879 / 0.104 \\
V1\_03          & 2.119 / 0.224    & 2.273 / 0.206    & \textbf{1.649} / 0.214    & 2.136 / \textbf{0.197} \\
V2\_01          & 3.065 / 0.224    & 0.865 / 0.213    & 0.835 / \textbf{0.157}    & \textbf{0.759} / 0.193 \\
V2\_02          & 2.035 / 0.089    & 1.958 / 0.080    & \textbf{1.482} / \textbf{0.074}    & 2.401 / 0.099 \\
V2\_03          & 2.088 / 0.361    & 2.953 / 0.286    & \textbf{1.257} / \textbf{0.259}    & 1.294 / 0.293 \\
MH\_01          & 6.129 / 0.595    & 1.533 / 0.473    & 2.272 / 0.544    & \textbf{1.423} / \textbf{0.364} \\
MH\_02          & 4.454 / 0.295    & 1.097 / 0.506    & \textbf{0.820} / 0.210    & 1.622 / \textbf{0.199} \\
MH\_03          & \textbf{1.354} / \textbf{0.198}    & 2.299 / 0.295    & 2.288 / 0.313    & 1.804 / 0.260 \\
MH\_04          & 2.458 / 0.815    & 0.801 / \textbf{0.300}    & \textbf{0.627} / 0.393    & 0.988 / 0.409 \\
MH\_05          & 1.988 / 0.657    & 1.180 / 0.569    & 1.282 / \textbf{0.426}    & \textbf{1.059} / 0.476 \\
\midrule
room1     & 2.264 / 0.146    & 1.809 / 0.125    & 2.065 / \textbf{0.108}    & \textbf{1.381} / 0.108 \\
room2     & 14.95 / 0.334    & 3.032 / \textbf{0.200}    & 1.172 / 0.209    & \textbf{0.842} / 0.223 \\
room3     & 11.53 / 0.249    & \textbf{1.319} / 0.158    & 1.973 / 0.153    & 1.699 / \textbf{0.145} \\
room4     & 3.930 / \textbf{0.211}    & 3.049 / 0.242    & \textbf{2.186} / 0.217    & 2.257 / 0.229 \\
room5     & 2.458 / 0.137    & 3.208 / \textbf{0.119}    & 2.710 / 0.189    & \textbf{1.875} / 0.133 \\
room6     & 7.106 / 0.302    & 4.205 / \textbf{0.148}    & \textbf{4.047} / 0.288    & 4.361 / 0.249 \\
\midrule
forward\_3     & 7.796 / 0.995    & \textbf{5.046} / \textbf{0.653}    & 5.310 / 0.684    & 5.152 / 0.667 \\
forward\_5     & 1.631 / 0.456    & \textbf{0.860} / 0.401    & 1.121 / \textbf{0.392}    & 1.051 / 0.629 \\
forward\_6     & 3.471 / 0.439    & \textbf{2.652} / \textbf{0.325}    & 2.689 / 0.371    & 2.869 / 0.377 \\
forward\_7     & 5.876 / 1.090    & \textbf{4.142} / 1.036    & 4.710 / \textbf{0.875}    & 4.347 / 0.994 \\
forward\_9     & 0.984 / \textbf{0.443}    & 1.027 / 0.597    & 1.004 / 0.505    & \textbf{0.941} / 0.516 \\
forward\_10    & 1.267 / 0.379    & 0.881 / 0.387    & 0.850 / 0.359    & \textbf{0.843} / \textbf{0.319} \\
45\_2     & 1.369 / 0.512    & 1.015 / \textbf{0.468}    & \textbf{0.905} / 0.537    & 0.932 / 0.555 \\
45\_4     & 0.917 / \textbf{0.473}    & \textbf{0.605} / 0.505    & 0.647 / 0.565    & 0.720 / 0.488 \\
45\_12     & 1.174 / \textbf{0.384}    & 1.177 / 0.508    & 0.745 / 0.398    & \textbf{0.721} / 0.399 \\
45\_13     & 1.977 / 0.493    & \textbf{1.168} / 0.484    & 1.342 / \textbf{0.427}    & 1.293 / 0.521 \\
45\_14     & 2.417 / \textbf{0.374}    & 2.412 / 0.452    & \textbf{2.315} / 0.417    & 2.416 / 0.440 \\
\midrule
Average                  & 3.484 / 0.383    & 1.900 / 0.343    & 1.744 / \textbf{0.326}    & \textbf{1.720} / 0.334 \\

\bottomrule
\end{tabular}
\end{table}

\begin{table*}[!htb] 
    \centering
    \caption{RMSE (deg/meter) on Dataset KAIST-Urban}
    \begin{threeparttable}          
            \setlength{\tabcolsep}{5pt}
            \begin{tabular}[]{cccccccc}
                    \toprule
                      & \multirow{2}{*} {Estimator} & urban28   & urban32 &urban34 &urban38 &urban39 & \multirow{2}{*} {Average}  \\
                      & & {\scriptsize (11212m)} &\scriptsize (7131m) &\scriptsize (7881m) &\scriptsize (11192m) &\scriptsize (10678m) \\
                    \midrule
                    \multirow{4}{*}{\rotatebox{0}{Mono}} & Std & \textbf{1.736} / \textbf{60.964} & 3.606 / 70.477 & 1.977 / 41.034 & 4.928 / 59.569 & 4.253 / 51.849 & 3.300 / 56.779 \\
                     & FEJ & - / -  & 2.500 / 122.49 & 2.574 / 169.57 & - / -  & 4.156 / 132.55 & 3.077 / 141.54 \\
                     & \textbf{USA-IT}  & 1.877 / 63.774 & 2.161 / 44.256 & \textbf{1.516} / 29.301 & \textbf{1.396} / \textbf{17.747} & \textbf{1.510} / \textbf{25.560} & \textbf{1.692} / \textbf{36.128} \\
                     &\textbf{USA-DT}  & 1.771 / 65.067 & \textbf{1.862} / \textbf{40.128} & 1.639 / \textbf{27.005} & 1.614 / 23.880 & 1.692 / 33.084 & 1.716 / 37.833 \\
                    \midrule
                    \multirow{4}{*} {\rotatebox{0}{Stereo}} &Std & 3.596 / 40.042 & 1.391 / 21.855 & 2.134 / 40.659 & 2.386 / 23.046 & 2.030 / 21.892 & 2.308 / 29.499 \\
                    & FEJ  & 1.668 / \textbf{10.738} & 1.523 / 27.540 & \textbf{1.585} / \textbf{25.154} & 1.651 / 14.609 & 2.378 / 16.025 & 1.761 / 18.813 \\
                    & \textbf{USA-IT} & \textbf{1.553} / 12.418 & \textbf{1.222} / \textbf{12.741} & 1.691 / 27.120 & \textbf{1.457} / \textbf{11.938} & \textbf{1.505} / \textbf{12.084} & \textbf{1.485} / \textbf{15.260}\\
                    & \textbf{USA-DT}  & 1.788 / 15.379 & 1.337 / 18.920 & 1.851 / 30.637 & 1.563 / 14.009 & 1.585 / 12.802 & 1.625 / 18.349\\
                    \bottomrule
            \end{tabular}
    \end{threeparttable}       
    \label{tab:RMSE_Kaist}
\end{table*}

Figure \ref{fig:exp_yaw_nees} shows the NEES for the orientation around gravity (Yaw) over time, which is used to evaluate the consistency of the estimators. Because the Std incorrectly treats Yaw as an observable, its estimation exhibits overconfidence. In contrast, the other estimators ensure the correct observability, so their NEES values are significantly lower than those of the Std method, indicating improved consistency.

\begin{figure}
    \centering
    \includegraphics[width=0.99\linewidth]{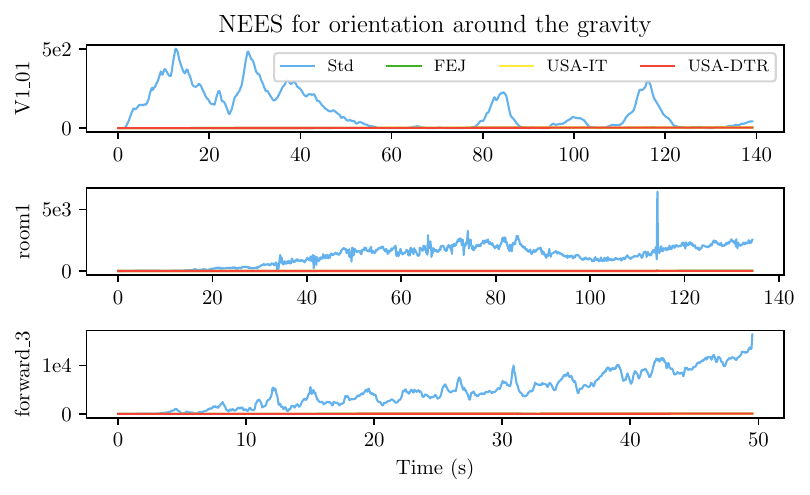}
    \caption{The NEES value over time on subset V1\_01, room1, and forward\_3. 
    The NEES of Std is much greater than that of the others.
    The lines of FEJ, USA-IT, and USA-DTR overlap.}
    \label{fig:exp_yaw_nees}
\end{figure}

\textbf{Outdoor Datasets:}
The KIAST-Urban dataset \cite{jeong2019complex} is an outdoor dataset that captures the complex characteristics of urban environments. The dataset is collected using a vehicle equipped with multiple sensors, including an IMU operating at 200 Hz and a stereo camera system with a baseline of 0.47 meters. 
The lengths of the vehicle's trajectories are about 10 km, with the maximum speed exceeding 20 m/s, as shown in Fig. \ref{fig:traj_urban39}.
To better parameterize landmarks with large depths, we utilize the local inverse depth feature representation. Furthermore, dynamic objects such as vehicles and pedestrians are removed from the images using YOLOv11-seg \cite{khanam2024yolov11}  segmentation during our experiments.

\begin{figure}[htbp]
    \centering
    \includegraphics[width=0.99\linewidth]{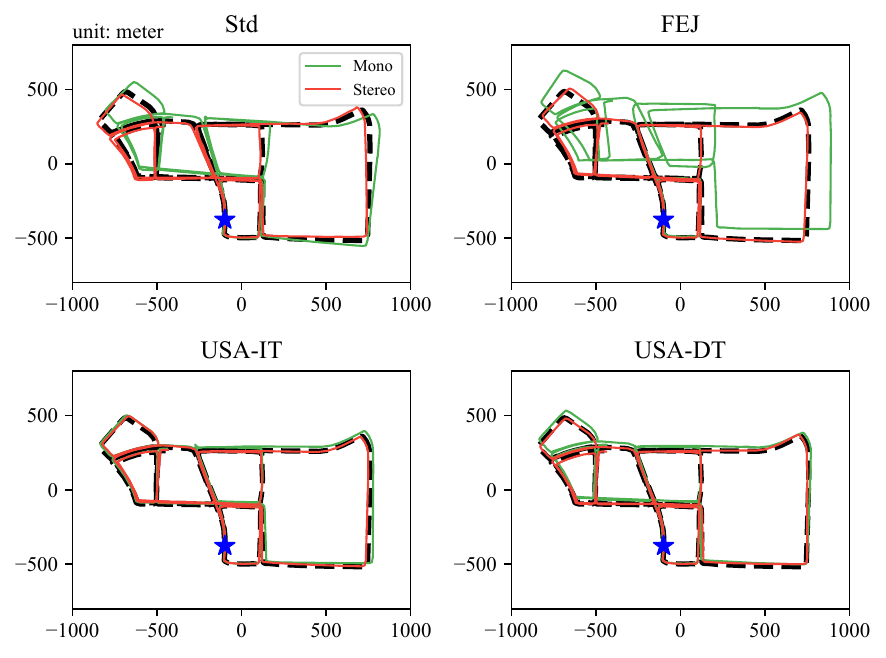}
    \caption{Ground truth (black dash) and estimated trajectories (colored solids) with monocular and stereo configurations on the urban39 subset. The starting point is highlighted using the blue star. Compared to FEJ, USA-IT and USA-DT perform more stably across different configurations. }
    \label{fig:traj_urban39} 
\end{figure}

Table \ref{tab:RMSE_Kaist} reports the RMSE results of Std, FEJ, USA-IT, and USA-DT  on the KAIST-Urban dataset. 
Among the estimators, USA-IT and USA-DT exhibit the best accuracy. 
More importantly, compared to FEJ, both USA-DT and USA-IT consistently demonstrate stable performance in both monocular and stereo configurations.
As mentioned in Section \ref{sec:sim_usa}, to ensure consistency, FEJ sacrifices the optimality of the Jacobian. When the initialization of features is poor, such as under monocular conditions, the performance of FEJ is affected by the non-optimality of the Jacobian and can even be worse than that of Std. 
In the stereo configuration, FEJ improves estimation accuracy compared to Std, benefiting from the improved feature initialization provided by the large stereo baseline.

\section{Conclusion}
This paper investigates the inconsistency issue in VINS by introducing two novel methods: Unobservable Subspace Evolution (USE) and Unobservable Subspace Alignment (USA). Through the USE analysis framework, we systematically analyzed how the unobservable subspace evolves across the full estimation pipeline and revealed that observability misalignment, introduced by specific estimation steps such as correction and delayed feature initialization, serves as the antecedent of observability mismatch. This insight provides a more complete understanding of the dynamics underlying inconsistency and clarifies a long-standing misconception regarding the consistency properties of MSCKF-based estimators.
Building upon this analysis, we propose USA as a simple yet effective solution paradigm. USA eliminates inconsistency by selectively intervening only at the specific estimation steps causing misalignment, while preserving Jacobian optimality and avoiding the need for complex state redefinitions or structural reformulations. Two realizations of USA are designed: the transformation-based USA offers closed-form implementations with the flexibility to balance accuracy and efficiency, while the re-evaluation-based USA can be combined with transformations to enhance accuracy without added computation. Extensive simulations and real-world experiments demonstrate that USA achieves superior or comparable performance in both accuracy and consistency compared to existing state-of-the-art methods, while maintaining higher computational efficiency and flexibility.

The proposed methods open several promising directions for future research. In particular, extending USE and USA to resolve inconsistencies in other estimation problems, such as cooperative localization, map-based SLAM, and LiDAR–inertial odometry, may provide principled and efficient means for addressing inconsistency across a wider range of robotic systems.

\appendix

\subsection{Transformation-based USA within Covariance Form}
\label{appendix:trans_cov}
Transforming the information matrix described by \eqref{equ:La_align} can be implemented within the covariance form:
\begin{equation}
    \bfP^{\smalloplus} \gets \bfT^{-1} \bfP^{\smalloplus} {\bfT^{-1}}^\top
\end{equation}
where $\bfP^{\smalloplus}$ is the covariance matrix. 

For the direct transformation, its inverse also has a simple form:
\begin{equation}
   \bfT^{-1} = \bfI - \frac{1}{1+\boldsymbol{\beta}^\top\boldsymbol{\alpha}} \boldsymbol{\alpha}  \boldsymbol{\beta}^\top \label{equ:USA2_Tinv}
\end{equation}
which can be efficiently implemented by optimizing the order of matrix multiplications:
\begin{equation}
    \bfT^{-1} \bfP^{\smalloplus} {\bfT^{-1}}^\top =  \bfP^{\smalloplus} + \bfA +\bfA^\top + \mathbf{B}
\end{equation}
with 
\begin{align}
    \bfA &= \boldsymbol{\alpha}'(  \boldsymbol{\beta}^\top \bfP^{\smalloplus}) \\
    \mathbf{B}&= \boldsymbol{\alpha}' \left((\boldsymbol{\beta}^\top \bfP^{\smalloplus})\boldsymbol{\beta} \right) \boldsymbol{\alpha}'^\top\\
    \boldsymbol{\alpha}' &= -\frac{1}{1+\boldsymbol{\beta}^\top\boldsymbol{\alpha}}\boldsymbol{\alpha}
\end{align}
We can see that the matrix multiplications between large matrices are avoided in the above implementation.

For the indirect transformations, $\bfT^{-1}$ also has a simple form:
\begin{align}
     \bfT^{-1} = \mathbb{T}(\hat{\bfx}^{\smalloplus})^{-1} \mathbb{T}(\hat{\bfx}^{\smallominus}) \label{equ:USA1_Tinv} 
\end{align}
which can be decomposed as the multiplication of several simple matrices:
\begin{align}
    {\bfT^{-1}} & = \bfT_{\mathcal{F}} \bfT_{\mathcal{W}} \bfT_{\mathcal{I}}  \bfT_{R} 
\end{align}
where
\begin{align}
    \bfT_{R}  &= \begin{bmatrix}
        {{^G}\hat{\bfR}^{\smalloplus}_{I_\tau}}^{-1}{{^G}\hat{\bfR}^{\smallominus}_{I_\tau}}&\\
        & \bfI_{12+6n+3m}\end{bmatrix} \\
    \bfT_{\mathcal{I}} &= \begin{bmatrix}
        \bfI_3 &\\
        [{^G}\hat{\bfp}_{I_\tau}^{\smallominus} - {^G}\hat{\bfp}_{I_\tau}^{\smalloplus}]_\times{{^G}\hat{\bfR}^{\smalloplus}_{I_\tau}} & \bfI_3 &\\
        [{^G}\hat{\bfv}_{I_\tau}^{\smallominus} -  {^G}\hat{\bfv}_{I_\tau}^{\smalloplus}]_\times{{^G}\hat{\bfR}^{\smalloplus}_{I_\tau}} && \bfI_3 &\\
       &&&\bfI_{6+6n+3m}\\
    \end{bmatrix} \\
    \bfT_{\mathcal{W}} &= \begin{bmatrix}
        \bfI_{15} &\\
        &\mathbb{W}(\hat{\bfx}^{\smalloplus})^{-1} \mathbb{W}(\hat{\bfx}^{\smallominus})\\
        && \bfI_{3m}
    \end{bmatrix} \\
    \bfT_{\mathcal{F}} &= \begin{bmatrix}
        \bfI_{3} \\
         & \bfI_{12 + 6n} \\
        \left( \mathbb{F}(\hat{\bfx}^{\smallominus}) - \mathbb{F}(\hat{\bfx}^{\smalloplus}) \right) {{^G}\hat{\bfR}^{\smalloplus}_{I_\tau}}& & \bfI_{3m}\\
    \end{bmatrix}
\end{align}
$\bfT_{\mathcal{I}} $, $  \bfT_{\mathcal{W}} $, and $ \bfT_{\mathcal{F}} $ can be continuously decomposed into the product of $2$, $2n$, and $m$ elementary matrices with a block size of $3\times 3$, respectively. Thus, the indirect transformation can be regarded as a sequence of simple elementary multiplications, which can be efficiently implemented.

\subsection{Status Evolution Triggered by State Augmentation and Marginalization} 
Since state augmentation and marginalization do not correct the state estimates, they will not cause observability misalignment, i.e.,
\begin{subequations}
    \begin{align}
        & \delta(\text{Aligned}, \text{Augmentation}) =\text{Aligned},\\ 
        & \delta(\text{Aligned}, \text{Marginalization}) =\text{Aligned}.
    \end{align}
\end{subequations}
\label{app:imu_aug}%
The proofs are as follows:

\textbf{Augmentation:}
Every time an image is received, the IMU state is predicted to the current time. Then the system state is augmented with a clone of the current IMU pose $\hat{\boldsymbol{\pi}}_{\tau}$. 
During IMU state augmentation, the estimate and the information matrix are updated as:
\begin{equation}
    \hat{\bfx}^{\smalloplus} =\begin{bmatrix}
        \hat\bfx^{\smallominus} \\ \hat{\boldsymbol{\pi}}_{\tau}
    \end{bmatrix}, \quad \bfLa^{\smalloplus}  = \begin{bmatrix}
        \bfLa^{\smallominus} & \bfZo \\ \bfZo & \bfZo 
    \end{bmatrix} + \bfH_{\text{aug}}^\top \bfR^{-1} \bfH_{\text{aug}},
\end{equation}
where 
\begin{equation}
    \bfH_{\text{aug}} = \begin{bmatrix}
        \bfI_6&\bfZo_{6\times (9+3m+6n) } &-\bfI_6
    \end{bmatrix}.
\end{equation}
and $\bfR $ is a positive definite matrix that infinitely approaches zero, ensuring that the augmentation operation in the information form is equivalent to that in the covariance form. Since the linearization point does not change during augmentation, we have:
 \begin{equation}
    \mathbf{N}^{\smalloplus} = \bfN(\hat\bfx^{\smalloplus}) = \begin{bmatrix}
        \bfN(\hat\bfx^{{\smallominus}}) \\
        \hline 
        \begin{matrix}
            \bfZo_{3\times 3}& -{{^G}{\hat\bfR}_{I_\tau}^{\smallominus}}^{{-1}} \bfg \\
            \bfI_3 & [{^G}{\hat\bfp}_{I_\tau}^{\smallominus}]_\times \bfg\\
        \end{matrix}
    \end{bmatrix}.
    \label{equ:85}
\end{equation}

\textbf{Marginalization:}
If the number of the cloned IMU poses exceeds a threshold or the features are no longer tracked, the oldest clone or the features are marginalized out from the state vector. 
Let the estimate, information matrix, and unobservable subspace before marginalization be partitioned as:
\begin{equation}
    \hat\bfx^{\smallominus}  = \begin{bmatrix}
        \hat \bfx_{\text{r}} \\  \hat \bfx_{\text{m}}
    \end{bmatrix},  \bfLa^{\smallominus}   = \begin{bmatrix}
        \bfLa_{\text{rr}} & \bfLa_{\text{rm}} \\
        \bfLa_{\text{mr}} & \bfLa_{\text{mm}}
    \end{bmatrix},\mathbf{N}^{\smallominus} = \begin{bmatrix}
        \bfN(\hat \bfx_{\text{r}}) \\
        \bfN(\hat \bfx_{\text{m}})
    \end{bmatrix}.
\end{equation}
where $ \hat \bfx_{\text{m}}$ is the state variable that is going to be marginalized out, and $\hat \bfx_\text{r}$ is the rest of $\hat\bfx^{\smallominus} $. During state marginalization, the estimate and the information matrix are updated as:
\begin{equation}
    \hat{\bfx}^{\smalloplus} =\hat\bfx_{\text{r}}, \quad \bfLa^{\smalloplus}  = \bfLa_{\text{rr}} - \bfLa_{\text{rm}} \bfLa_{\text{mm}}^{-1} \bfLa_{\text{mr}}.
\end{equation}
Then, we have 
\begin{subequations}
    \begin{align}
    \mathbf{N}^{\smalloplus} & = \texttt{null}({\bfLa}^{\smalloplus})\\
    & = \bfN(\hat{\bfx}_{\text{r}}) \\
    & = \bfN(\hat{\bfx}^{\smalloplus}). \label{equ:88c}
\end{align}
\end{subequations}
Equations \eqref{equ:85} and \eqref{equ:88c} demonstrate that after state augmentation and marginalization, the unobservable subspace remains Aligned.

\bibliographystyle{IEEEtran}
\bibliography{ref.bib,ref2.bib}

\end{document}